 \crefname{section}{Section}{Sections}
 \crefname{theorem}{Theorem}{Theorems}
 \crefname{lemma}{Lemma}{Lemmas}
 \crefname{equation}{Equation}{Equations}
 \crefname{proposition}{Proposition}{Propositions}
 \crefname{claim}{Claim}{Claims}
\crefname{appendix}{Appendix}{Appendices}
 \crefname{algorithm}{Algorithm}{Algorithms}
 \crefname{figure}{Figure}{Figures}
 \crefname{table}{Table}{Tables}
 \crefname{remark}{Remark}{Remarks}
 \crefname{definition}{Definition}{Definitions}
 \crefname{equation}{Equation}{Equations}
 \crefname{corollary}{Corollary}{Corollaries}
 \crefname{observation}{Observation}{Observations}
\newcommand{\R}{\mathbb{R}}
\newcommand{\N}{\mathbb{N}}
\newcommand{\Z}{\mathbb{Z}}
\newcommand{\V}{\mathcal{V}}
\newcommand{\X}{\mathcal{X}}
\newcommand{\x}{\mathbf{x}}
\newcommand{\y}{\mathbf{y}}
\newcommand{\ba}{\mathbf{a}} % boldface a
\newcommand{\bb}{\mathbf{b}}  % boldface b
\newcommand{\eb}{\mathbf{e}}
\newcommand{\rhob}{\bm{\rho}}
\newcommand{\xib}{\bm{\xi}}
\newcommand{\thetab}{\bm{\theta}}
\newcommand{\MV}{\mathcal{M}_{\mathcal{V}}}
\newcommand{\PMV}{\mathcal{P}(\mathcal{M}_{\mathcal{V}})}
\newcommand{\EV}{\mathbb{E}}
\newcommand{\q}{q(\x,\rhob)}
\newcommand{\Rb}{\mathcal{R}_{\mathcal{V}}}
\newcommand{\MA}{\mathcal{A}}
\newcommand{\MB}{\mathcal{B}}
\DeclarePairedDelimiter\ceil{\lceil}{\rceil}
\DeclarePairedDelimiter\floor{\lfloor}{\rfloor}
\DeclarePairedDelimiter\innerprod{\langle}{\rangle}
\DeclarePairedDelimiterX{\infdivx}[2]{(}{)}{%
  #1\;\delimsize\|\;#2%
}
\newcommand{\kl}{KL\infdivx}
\newcommand{\Int}{\int\limits}
\DeclareMathOperator*{\argmax}{arg\,max}
\newcommand{\groundset}{\ensuremath{\mathcal{V}}}
\newcommand{\sete}[3]{\mathbf #1|_{#2} (#3)}
\newcommand{\epe}[2][]{\underset{#1}{\mathbb E}\left[#2\right]}  % expectation with brackets and arguments
\newcommand{\algname}[1]{{\texttt{#1}}}
\newcommand{\shrunkenfw}{\algname{Shrunken FW}\xspace}
\newcommand{\twophasefw}{\algname{Two-Phase FW}\xspace}
\newcommand{\blockca}{\algname{Block CA}\xspace}
\newtheorem{proposition}{Proposition}
\newtheorem*{proposition*}{Proposition}
\icmltitlerunning{Provable Variational Inference for Probabilistic Integer Submodular Models}
\begin{document}

\twocolumn[
\icmltitle{From Sets to Multisets: Provable Variational  Inference for\\ Probabilistic Integer Submodular Models}

% It is OKAY to include author information, even for blind
% submissions: the style file will automatically remove it for you
% unless you've provided the [accepted] option to the icml2020
% package.

% List of affiliations: The first argument should be a (short)
% identifier you will use later to specify author affiliations
% Academic affiliations should list Department, University, City, Region, Country
% Industry affiliations should list Company, City, Region, Country

% You can specify symbols, otherwise they are numbered in order.
% Ideally, you should not use this facility. Affiliations will be numbered
% in order of appearance and this is the preferred way.
\icmlsetsymbol{equal}{*}

\begin{icmlauthorlist}
\icmlauthor{Aytunc Sahin}{ethz}
\icmlauthor{Yatao Bian}{tencent}
\icmlauthor{Joachim M. Buhmann}{ethz}
\icmlauthor{Andreas Krause}{ethz}
\end{icmlauthorlist}

\icmlaffiliation{ethz}{Department of Computer Science, ETH Zurich, Zurich, Switzerland}
\icmlaffiliation{tencent}{Tencent AI Lab, Shenzhen, China 518057}

\icmlcorrespondingauthor{Aytunc Sahin}{aytunc.sahin@inf.ethz.ch}
% \icmlcorrespondingauthor{Eee Pppp}{ep@eden.co.uk}

% You may provide any keywords that you
% find helpful for describing your paper; these are used to populate
% the "keywords" metadata in the PDF but will not be shown in the document
\icmlkeywords{Machine Learning, ICML, Submodularity, Probabilistic Submodular Models, Variational Inference}

\vskip 0.3in
]

% this must go after the closing bracket ] following \twocolumn[ ...

% This command actually creates the footnote in the first column
% listing the affiliations and the copyright notice.
% The command takes one argument, which is text to display at the start of the footnote.
% The \icmlEqualContribution command is standard text for equal contribution.
% Remove it (just {}) if you do not need this facility.

\printAffiliationsAndNotice{}  % leave blank if no need to mention equal contribution
%\printAffiliationsAndNotice{\icmlEqualContribution} % otherwise use the standard text.

\begin{abstract}
\looseness -1 Submodular functions have been studied extensively in machine learning and data mining. In particular, the optimization of submodular functions over  the \emph{integer lattice} (integer submodular functions) has recently attracted much interest, because this domain relates naturally to many practical problem settings, such as multilabel graph cut,  budget allocation and revenue maximization with discrete assignments. In contrast, the use of these functions for probabilistic modeling has received surprisingly little attention so far. 
In this work, we firstly propose the Generalized Multilinear Extension, a continuous DR-submodular extension for integer submodular functions. We study central properties of this extension and formulate a new probabilistic model which is defined through \emph{integer} submodular functions. Then, we introduce a block-coordinate ascent algorithm to perform approximate inference for those class of models. Finally, we demonstrate its effectiveness and viability on several real-world social connection graph datasets with integer submodular objectives.
\end{abstract}

\section{Introduction}

\looseness -1 Submodular functions have many applications in machine learning, which include data summarization~\citep{tschiatschek2014learning, Lin:2012:LMS:3020652.3020704}, sensor placement~\citep{sensorplacement} and computer vision~\citep{boykovfast}. They are defined on the subsets of the ground set $\V$ which contains $n$ elements. Alternatively, they can be described as a pseudo-Boolean function which is defined on vertices of the unit hypercube $\{0,1\}^n$. 
A crucial benefit of submodular functions is their attractive properties from the optimization perspective. Using the celebrated greedy algorithm of~\citet{Nemhauser1978}, one can get a constant factor approximation guarantee for maximizing a submodular function subject to  cardinality constraints. However, for more complex constraints, such as matroid constraints, the best guarantees can be obtained via a continuous relaxation of the problem. The {\em multilinear extension}~\citep{calinescu2007maximizing} is a continuous extension of a submodular function to the full hypercube $[0,1]^n$ and one can get the best approximation guarantee for matroid constraints by optimizing it using the continuous greedy algorithm \citep{Calinescu2011}.

\looseness -1 Recently, {\em Probabilistic Submodular Models (PSMs)} ~\citep{djolonga2014map,flid,gotovos2015sampling} were introduced as Gibbs distributions over subsets parameterized via a submodular energy function. These distributions can represent expressive probabilistic models by capturing complex interactions among elements. Since the partition function of these models requires a sum over all subsets, it is {\em intractable} for large ground sets. One way to approximate this probability distribution is to use {\em variational inference}. When a fully factorized Bernoulli distribution is chosen as a variational distribution, the lower bound for the log-partition function -- the {\em Evidence Lower Bound (ELBO)} -- can be decomposed into two parts: The multilinear extension and the entropy of the variational  distribution~\citep{bianmf}.

\looseness -1 Though originally associated with set functions, submodularity can be generalized to {\em integer} \citep{soma2014optimal,soma2018maximizing} and {\em continuous} domains \citep{Bach2019,staib2017robust,bian2017guaranteed,hassani2017gradient}. Submodular functions over the integer lattice have applications in optimal budget allocation \citep{soma2014optimal}, sensor placement with discrete energy levels \citep{DBLP:conf/nips/SomaY15}  and influence maximization with partial incentives \citep{demaine2014influence}. Submodular optimization over continuous domains captures a rich class of tractable non-convex optimization problems, and arises in various applications, such as revenue maximization with continuous assignments \citep{bian2017guaranteed}, 
robust budget allocation \citep{staib2017robust}
and provable mean-field inference for PSMs \citep{bianmf}. However, the family of {\em probabilistic submodular models defined through integer and continuous submodular functions} has been little explored, and it is, for example, not clear how to apply variational inference in a principled way. 

Our main contributions are summarized as follows:
\begin{enumerate}

    \item We present the \emph{Generalized Multilinear Extension} (GME) for integer  submodular functions and continuous submodular functions. The relation of GME to other continuous extensions is described in \cref{fig:contribution}. 
    Though being non-convex, GME is DR-submodular in general, which allows to use guaranteed DR-submodular maximization  techniques to conduct provable variational inference.
    
    \item We introduce {\em Probabilistic Integer Submodular Models}, which are characterized through integer submodular functions. We show that using an appropriate variational distribution we can obtain a DR-submodular ELBO
    for variational inference. This ELBO is obtained as an expectation w.r.t. the same variational distribution through which the GME is defined.

    \item To optimize this DR-submodular ELBO, we develop an efficient block coordinate ascent algorithm and show its effectiveness on real-world graph datasets.

\end{enumerate}
\begin{figure}[h]\centering
    \tabcolsep=0.11cm
    \begin{tabular}{l|l|c|c|}
      \multicolumn{2}{c}{}&\multicolumn{2}{c}{Optimization}\\
      \cline{3-4}
      \multicolumn{2}{c|}{}&Minimize & Maximize\\
      \cline{2-4}
      \multirow{4}{*}{\rotatebox[origin=c]{90}{Domain}}& \multirow{2}{*}{Set} & Lovasz Ext.  & Multilinear Ext. \\
      
                          & & \citep{lovasz1983submodular} & \citep{calinescu2007maximizing} \\
          \cline{2-4}
                          & \multirow{2}{*}{Integer} & Extended Ext. & Generalized ME \\
          
                          &  & \citep{Bach2019} & \textbf{This work} \\
      \cline{2-4}
    \end{tabular}
        \caption{Continuous extensions for submodular functions depending on the function domain and optimization type.}
\label{fig:contribution}
\end{figure}

%%%%%%%%%%%%%%%%%%%%%%%%%%%%%%%%%%%%%%%%%%%%%%%%%%%%%%%%%%%%%%%%%%%%%%%%%%%%%%%%%%%%%%%%%%%%%%%%%%%%%%%%%%%%%%%%%%%%
%%%%%%%%%%%%%%%%%%%%%%%%%%%%%%%%%%%%%%%%%%%%%%%%%%%%%%%%%%%%%%%%%%%%%%%%%%%%%%%%%%%%%%%%%%%%%%%%%%%%%%%%%%%%%%%%%%%%
%%%%%%%%%%%%%%%%%%%%%%%%%%%%%%%%%%%%%%%%%%%%%%%%%%%%%%%%%%%%%%%%%%%%%%%%%%%%%%%%%%%%%%%%%%%%%%%%%%%%%%%%%%%%%%%%%%%%
%%%%%%%%%%%%%%%%%%%%%%%%%%%%%%%%%%%%%%%%%%%%%%%%%%%%%%%%%%%%%%%%%%%%%%%%%%%%%%%%%%%%%%%%%%%%%%%%%%%%%%%%%%%%%%%%%%%%

\section{Problem Setting}
\looseness -1 We consider submodular functions defined on the subsets of $\R^n$ of the form $\X = \prod_{i=1}^n \X_i$, where each $\X_i$ is a compact subset of $\R$~\citep{Bach2019}. A function $f$ is \emph{submodular} if $\forall \x, \y \in \X$ we have
\begin{equation}
    f(\x) + f(\y) \geq f(\x \lor \y) + f(\x \land \y),
\end{equation}
where $\lor$ and $\land$ are component-wise maximum and minimum functions respectively. From now on, a submodular function $f$ will be referred as 
\begin{itemize}
    \item \emph{set submodular} if $\X_i = \{0,1\}$
    \item \emph{integer submodular} if $\X_i = \{0,1, \ldots, k-1\}$
    \item \emph{continuous submodular} if $\X_i = [a,b]$
\end{itemize}

\looseness -1 For ease of notation, we assume that $\X_i$ is same along each dimension, i.e.\ $k$ does not depend on $i$. Our proofs will work with different $\X_i$ as well, i.e.\ using $k_i$ instead of $k$. When $f$ is continuous submodular and twice differentiable, we have
\begin{equation}
    \forall i \neq j, \forall \x \in \X, \frac{\partial^2 f(\x)}{\partial x_i \partial x_j} \leq 0.
\end{equation}
\looseness -1 A certain {\em diminishing returns} property plays a  significant role for the maximization of set submodular functions. A function $f$ has diminishing return property if $\forall A \subseteq B \subseteq \V$, $e \notin B$ we have $f(A + e) - f(A) \geq f(B +  e) -f (B)$. For set functions, this diminishing return property is equivalent to submodularity. However, for integer functions, submodularity is a weaker condition than the diminishing returns property. An integer function is called \emph{diminishing return submodular} (\emph{DR-submodular}) if $\forall \x \leq \y \in \X$ and $i \in \V$, we have $f(\x + \eb_i) - f(\x) \geq f(\y + \eb_i) -f(\y)$ where $\eb_i$ is the $i$-th unit vector. When $f$ is continuous submodular and twice differentiable, DR-submodularity is equivalent to
\begin{equation}
    \forall i,j, \forall \x \in \X, \frac{\partial^2 f(\x)}{\partial x_i \partial x_j} \leq 0.
\end{equation}
Thus, submodularity only imposes constraints on {\em interactions} of coordinates, whereas DR-submodularity also requires that $f$ is {\em concave along each coordinate} (which of course does not require that $f$ is globally concave).

\looseness -1 We will now discuss an equivalence between the 
integer vectors and classical \emph{multisets}. With this 
connection, we have the following benefits: {\em i)} By borrowing well-established concepts, such as cardinality, intersection and set difference from multiset theory, we can define the corresponding concepts for integer vectors in a principled way. {\em ii)} The notions from multiset theory will ease the proof in the following part.  

\looseness -1 We follow the exposition from \citet{multiset1} and \citet{multiset2} to define basic notions of multisets.  
A $\emph{multiset}$ is a natural generalization of a set, where elements can be contained {\em repeatedly}, i.e., it may contain a finite number of copies of a particular element. In classical sets, in contrast, distinct elements  can only occur once. The number of times an element occurs in a multiset is called the \emph{multiplicity} $\mu(i)$ of the element $i$.

\looseness -1 Formally, a multiset $\MV$ is defined as a pair $\langle \groundset, \mu \rangle$, where $\groundset$ is the support and $\mu : \groundset \rightarrow \N $ is a function defining multiplicity for each element. Any ordinary set $\groundset$ can be viewed as a multiset $\langle \groundset,  \chi_{\groundset} \rangle$ where $\chi_{\groundset}$ is the characteristic function of the set $\groundset$.  A \emph{submultiset}  is a generalization of a subset. For each element in the support, you can choose a multiplicity to create a submultiset. Finally, we define the \emph{powerset} of a multiset $\PMV$ as the set of submultisets of $\MV$ and the cardinality of the powerset is defined by 
\begin{equation*}
    | \PMV | = \prod\nolimits_{i \in \groundset} (\mu(i) + 1).
\end{equation*}
\looseness -1  Note that this concept is consistent with the ordinary set theory where the multiplicity for each element is 1, i.e. $\mu(i)=1$ and we recover $|\mathcal{P}(\groundset)| = 2^n$.

\looseness -1 Given the definitions of multisets, it is clear that one can represent any multiset with an integer vector, by viewing the multiplicity of the multiset $\mu$ as the argument $\x$ of the integer vectors. 
With this connection, we can transfer several important notions from multisets to integer vectors, such as the notion of a subset, set intersection, set union and set difference. 
%
% The notion of a subset is a main concept in set theory. Moreover, for ordinary sets there are important operations one can perform between sets, such as intersection, union and set difference. 

\looseness -1 Now we  define  the notion of the subset of a multiset and the operations between multisets. Suppose $\mathcal{A} = \langle \groundset, \mu \rangle$ and $\mathcal{B} = \langle \groundset, \nu \rangle$ are two multisets and the corresponding integer vectors are $\ba$ and $\bb$. Then $\mathcal{A}$ is a \emph{sub-multiset} of $\mathcal{B}$, $\mathcal{A} \subseteq \mathcal{B}$ if for all $e \in \groundset$ we have $\mu(e) \leq \nu(e)$ (or if $\ba\leq \bb$ for integer vectors). The \emph{set difference} of $\MA$ from $\MB$, $\MA \setminus \MB$, is the multiset $\mathcal{C} = \langle \groundset, \xi\rangle$
where for all $e \in \groundset$ we have $\xi(e) = \max(\mu(e) - \nu(e),0)$ (or $(\ba -\bb)\vee 0$ for integer vectors). 
The union and intersection of two multisets are similarly defined where $\xi(e) = \max(\mu(e), \nu(e))$ and $\xi(e) = \min(\mu(e), \nu(e))$ respectively (or $\ba \vee \bb$ and $\ba \wedge \bb$ for integer vectors).

%%%%%%%%%%%%%%%%%%%%%%%%%%%%%%%%%%%%%%%%%%%%%%%%%%%%%%%%%%%%%%%%%%%%%%%%%%%%%%%%%%%%%%%%%%%%%%%%%%%%%%%%%%%%%%%%%%%%
%%%%%%%%%%%%%%%%%%%%%%%%%%%%%%%%%%%%%%%%%%%%%%%%%%%%%%%%%%%%%%%%%%%%%%%%%%%%%%%%%%%%%%%%%%%%%%%%%%%%%%%%%%%%%%%%%%%%
%%%%%%%%%%%%%%%%%%%%%%%%%%%%%%%%%%%%%%%%%%%%%%%%%%%%%%%%%%%%%%%%%%%%%%%%%%%%%%%%%%%%%%%%%%%%%%%%%%%%%%%%%%%%%%%%%%%%
%%%%%%%%%%%%%%%%%%%%%%%%%%%%%%%%%%%%%%%%%%%%%%%%%%%%%%%%%%%%%%%%%%%%%%%%%%%%%%%%%%%%%%%%%%%%%%%%%%%%%%%%%%%%%%%%%%%%

\section{The Generalized Multilinear Extension}
\looseness -1 Let $f$ be an integer submodular function of $n$ variables and $\rhob_i\in \R_+^{k-1}$ be the marginals of  a $k$ dimensional categorical distribution. Let $\rhob :=  [\rhob_1; ...; \rhob_n]\in \R_+^{n \times (k-1)}$.
\looseness -1 The generalized multilinear extension $F$ is defined on the space of product of categorical distributions and can be written as:
\begin{equation}\label{eq_GME}
    F(\rhob_1, \ldots, \rhob_n)= \EV_{R(\rhob) \sim   \rhob_1, \ldots, \rhob_n}  [f(R(\rhob))].
\end{equation}
Hereby $R(\rhob)$ is a random integer vector where each dimension is sampled from an independent categorical distribution. In other words, each $\rhob_i$ lives in the $k-1$ dimensional simplex $\Delta^{k-1}$ and $\rhob$ is the concatenation of all $\rhob_i$ vectors. We define the simplex $\Delta^{k-1}$ as 
\begin{align*}
    \Delta^{k-1}:= \{  \rhob_i \in \R^{k-1} : \rho_{i,1} + \ldots + \rho_{i,k-1} \leq 1, \\
    \rho_{ij} \geq 0, j=1, \ldots,  k-1 \}.  
\end{align*} 
We define the union of $n$ simplexes as $\Delta^{k-1}_n$, and naturally $\rhob \in \Delta^{k-1}_n$. We need to compute the sum of $k^n$ elements to compute the expectation in \cref{eq_GME}. Note that when $k=2$, this extension corresponds to the multilinear extension of a set function. Here is an example with $n=2$ and $k=3$. In this case we have two categorical distributions which take three different values. 
\begin{align*}
   & F([ \rhob_1; \rhob_2]) = F(\rho_{11}, \rho_{12}, \rho_{21}, \rho_{22})= \\
   &f(0,0) (1 - \rho_{11} - \rho_{12}) (1 - \rho_{21} - \rho_{22}) + f(2,2) \rho_{12} \rho_{22} \\
    &f(1,0) \rho_{11} (1 - \rho_{21} - \rho_{22}) + f(0,1) (1-\rho_{11} - \rho_{12}) \rho_{21} + \\
    & f(2,0) \rho_{12} (1- \rho_{21} - \rho_{22}) + f(0,2) (1-\rho_{11} - \rho_{12}) \rho_{22} + \\
    & f(1,1) \rho_{11} \rho_{21} + f(1,2) \rho_{11} \rho_{22} + f(2,1) \rho_{12} \rho_{21},
\end{align*}
where we have the following constraints
\begin{align*}
    \rho_{11} + \rho_{12} \leq 1, \ \rho_{21} + \rho_{22} \leq 1, \  \rho_{ij} \geq 0, \  i,j=1,2.
\end{align*}
Therefore, maximizing the GME requires solving a constrained optimization problem with linear constraints. Now, we are going to answer the following questions:

\begin{enumerate}
    \item \textit{What are the properties of this GME when the integer function is submodular?}
    \item \textit{Does it have similar properties to the multilinear extension which makes it suitable for  optimization purposes?}
\end{enumerate}

This generalized multilinear extension in \cref{eq_GME} has the following properties:
\begin{proposition}\label{prop_gme}
Let $F$ be the generalized multilinear extension of an integer submodular function $f$. Then we have
\begin{enumerate}
    \item If $f$ is monotone then $\frac{\partial F }{\partial \rho_{ij}} \geq 0$ for all $i\in \V$ and $j \in \{1, \ldots, k-1\}$;
    
    \item F is DR-submodular (even if $f$ is not), i.e.,  $\frac{\partial^2 F }{\partial \rho_{ij} \partial \rho_{kl}} \leq 0$ for all $i,j,k,l$.
\end{enumerate}
\end{proposition}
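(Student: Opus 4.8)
The plan is to work directly from the probabilistic definition in \cref{eq_GME}, compute the first and second partial derivatives of $F$ in closed form, and reduce each claimed inequality to a pointwise property of $f$. Fix a coordinate $i\in\V$ and condition on the other coordinates: if $R_{-i}$ denotes the random integer vector on the coordinates $\neq i$ --- whose law does not involve $\rhob_i$ because the coordinates are independent --- then $F(\rhob)=\EV_{R_{-i}}\big[\sum_{a=0}^{k-1}P(R_i=a)\,f(R_{-i};\,i\mapsto a)\big]$, where $f(\x;\,i\mapsto a)$ is $f$ with its $i$-th coordinate overwritten by $a$, and $P(R_i=a)=\rho_{ia}$ for $a\ge 1$ while $P(R_i=0)=1-\sum_m\rho_{im}$. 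Since $P(R_i=a)$ is affine in $\rhob_i$, differentiating in $\rho_{ij}$ keeps only the levels $a=j$ (coefficient $+1$) and $a=0$ (coefficient $-1$):
\[
\frac{\partial F}{\partial\rho_{ij}}=\EV_{R_{-i}}\big[f(R_{-i};\,i\mapsto j)-f(R_{-i};\,i\mapsto 0)\big].
\]
Part 1 is then immediate: if $f$ is monotone then, since $j\ge 1>0$, the integrand is pointwise nonnegative and hence so is the expectation.

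For Part 2 I would split into two cases according to whether the two differentiation variables lie in the same block. Within a single block --- second derivative $\partial^2F/\partial\rho_{ij}\partial\rho_{i\ell}$, including $j=\ell$ --- the formula above shows that, with all other blocks frozen, $F$ is an \emph{affine} function of $\rhob_i$ (a linear combination of the affine expressions $P(R_i=a)$ with $\rhob_i$-independent coefficients $\EV_{R_{-i}}[f(R_{-i};\,i\mapsto a)]$); therefore every second derivative inside a block vanishes, so in particular it is $\le 0$. (This is also the natural analogue of multilinearity for the GME.) If the two variables lie in different blocks, say $\partial^2 F/\partial\rho_{ij}\partial\rho_{i'j'}$ with $i\neq i'$, I would differentiate the first-derivative formula once more, now separating out coordinate $i'$ as well; the same argument (only levels $j'$ and $0$ survive) gives
\[
\frac{\partial^2 F}{\partial\rho_{ij}\partial\rho_{i'j'}}=\EV_{R_{-\{i,i'\}}}\big[\,f(\cdot;\,i\mapsto j,i'\mapsto j')-f(\cdot;\,i\mapsto 0,i'\mapsto j')-f(\cdot;\,i\mapsto j,i'\mapsto 0)+f(\cdot;\,i\mapsto 0,i'\mapsto 0)\,\big].
\]

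The crux is to recognize the bracketed quantity as exactly a submodularity difference. Fixing the outcome of $R_{-\{i,i'\}}$, let $\y$ be the resulting point with $i\mapsto j,\ i'\mapsto 0$ and let $\zb$ be the point with $i\mapsto 0,\ i'\mapsto j'$; since $j,j'\ge 1\ge 0$, the componentwise maximum $\y\lor\zb$ is the point with $i\mapsto j,\ i'\mapsto j'$ and the componentwise minimum $\y\land\zb$ is the point with $i\mapsto 0,\ i'\mapsto 0$ (all remaining coordinates agreeing throughout). Submodularity of $f$, $f(\y)+f(\zb)\ge f(\y\lor\zb)+f(\y\land\zb)$, says precisely that the bracketed integrand is $\le 0$, and taking expectations finishes Part 2. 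Note that only plain submodularity of $f$ was used, never its DR-submodularity, which is exactly why the conclusion holds ``even if $f$ is not'' DR-submodular. The only real obstacles are bookkeeping: tracking which levels survive each differentiation (the multiset picture --- $\rho_{ij}$ governing whether $j$ copies of element $i$ are placed --- makes this transparent) and picking $\y,\zb$ so that the four-term cross partial lines up with the lattice inequality; once that correspondence is in place the bound holds pointwise and the proposition follows.
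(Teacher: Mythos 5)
Your proposal is correct and follows essentially the same route as the paper: exploit multilinearity to write the first partial as the expected difference $f(\cdot;\,i\mapsto j)-f(\cdot;\,i\mapsto 0)$ (nonnegative by monotonicity), note that within-block second derivatives vanish by affineness, and identify the cross-block four-term second derivative with the lattice submodularity inequality $f(\y)+f(\zb)\ge f(\y\lor\zb)+f(\y\land\zb)$ applied to the points $\y=(i\mapsto j,\,i'\mapsto 0)$ and $\zb=(i\mapsto 0,\,i'\mapsto j')$. Your write-up is in fact slightly more explicit than the paper's, which encodes the same substitutions with the multiset notation $\mathcal{E}_i^j$ for ``set the multiplicity of element $i$ to $j$.''
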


\begin{proof}
\looseness -1 Given $\rhob \in \Delta^{k-1}_n$, let $\Rb$ be a random multiset where elements appear independently with probabilities $\rhob_i$. Since $F$ is multilinear, the partial derivative can be written as a difference of two GME.
\begin{align*}
    \frac{\partial F}{\partial \rho_{ij}} &= F(\rhob_1, \rhob_i = \eb_j, \rhob_n) - F(\rhob_1, \rhob_i = \mathbf{0}, \rhob_n) \\
    & = \EV_{\Rb \sim \rhob}[f(\Rb + \mathcal{E}_{i}^{j} )] - \EV_{\Rb \sim \rhob}[f(\Rb - \mathcal{E}_{i}^{j})]\\
    & \geq 0,
\end{align*}
since f is monotone and $\Rb - \mathcal{E}_{i}^{j} \subseteq \Rb + \mathcal{E}_{i}^{j} $ where $\mathcal{E}_{i}^{j}$ is the multiset where $i$th element has multiplicity $j$.

For the second statement, we first observe that $\frac{\partial^2 F }{\partial \rho_{ij} \partial \rho_{kl}} = 0$ when $i=k$ for all $j,l$ since $F$ is multilinear. 
\begin{align*}
    \frac{\partial^2 F }{\partial \rho_{ij} \partial \rho_{kl}} &= \frac{\partial}{\partial \rho_{ij}} \left( F(\rhob_k= \eb_l, \rhob_n) - F(\rhob_k= \mathbf{0}, \rhob_n) \right) \\
    &= F(\rhob_i = \eb_j, \rhob_k=\eb_l) + F(\rhob_i=\mathbf{0},\rhob_k=\mathbf{0}) \\
    &-F(\rhob_i=\eb_j, \rhob_k=\mathbf{0}) - F(\rhob_i=\mathbf{0}, \rhob_k=\eb_l) \\
    &= \EV[f(\Rb + \mathcal{E}_{i}^{j} + \mathcal{E}_{k}^{l})] + \EV[f(\Rb - \mathcal{E}_{i}^{j} - \mathcal{E}_{k}^{l})] \\
    &- \EV[f(\Rb + \mathcal{E}_{i}^{j} - \mathcal{E}_{k}^{l})] - f(\Rb - \mathcal{E}_{i}^{j} + \mathcal{E}_{k}^{l}) \\
    & \leq 0,
\end{align*}
since $f(\Rb + \mathcal{E}_{i}^{j} + \mathcal{E}_{k}^{l}) +  f(\Rb - \mathcal{E}_{i}^{j} - \mathcal{E}_{k}^{l}) \leq f(\Rb + \mathcal{E}_{i}^{j} - \mathcal{E}_{k}^{l}) + f(\Rb - \mathcal{E}_{i}^{j} + \mathcal{E}_{k}^{l})$ because of the integer submodularity of $f$. 
\end{proof}

Some remarks about \cref{prop_gme} are in order:
\begin{enumerate}
    \item  We \emph{do not} require $f$ to be  DR-submodular  in order to get a DR-submodular continuous extension. Even if $f$ is only integer submodular we still get a DR-submodular extension.

    \item This generalized multilinear extension can also be applied when $f$ is continuous submodular: By discretizing the interval $[a,b]$ where the function is defined, firstly we get an integer submodular function. Then applying the generalized multilinear extension, we get a DR-submodular extension.
\end{enumerate}

\subsection{On the Relation with Other Continuous Extensions}
To our knowledge, there are two other continuous extensions \citep{soma2018maximizing,Bach2019} that can be used for integer submodular functions. 

\looseness -1 \citet{soma2018maximizing} define a continuous extension in order to maximize an integer submodular function subject to a polymatroid constraint. For $\x \in \R^\groundset$, let $\floor{\x}$ denote the vector obtained by rounding down each entry of $\x$. For $a \in \R$, let $\innerprod{a}$ denote the fractional part of $a$, namely $\innerprod{a} = a - \floor{a}$. For $\x \in \R$, $\mathcal{D}(\x)$ is defined as the distribution from which $\overline{\x}$ is sampled such that $\overline{\x}(i) = \floor{\x(i)}$ with probability $1 - \innerprod{\x(i)}$ and $\overline{\x}(i) = \ceil{\x(i)}$ with probability $\innerprod{x(i)}$ for each $i \in \groundset$. Then the continuous extension $F_{S} :\R^{\groundset}_{+} \rightarrow \R_{+}$ is defined as
\begin{align*}
    F_{S}(\x) & = \EV_{\overline{\x} \sim \mathcal{D}(\x)}[f(\overline{\x})] \\
    &= \sum_{S \subseteq \groundset} f(\floor{\x} + \eb_{S}) \prod_{i \in S} \innerprod{\x(i)} \prod_{i \notin S} (1 - \innerprod{\x(i)}).
\end{align*}
In this case, $F$ is obtained by gluing the multilinear extension of $f$ restricted to each hypercube. However, this extension is non-smooth and it is not clear how it can be used in variational inference for PSMs with integer submodular functions. On the other hand, the proposed GME is smooth and corresponds to the one part of the ELBO for variational inference for PSMs with integer submodular functions as we show below. 

Another continuous extension, which is suited for submodular {\em minimization} is presented in \citet{Bach2019}. It is originally presented for continuous submodular functions but can also be used for integer submodular functions by discretizing the continuous interval.  The extension $f_{\text{cumulative}}$ is defined on the 
set of products of Radon probability measures $\mu \in \prod_{i=1}^n P(\mathcal{X}_{i})$ as:
\begin{align*}
    f_{\textrm{cumulative}}(\mu_1, \ldots, \mu_n) = \int_{0}^{1} f(\theta_1(t), \ldots, \theta_n(t)) dt,
\end{align*}
where $\theta_i(t) = \sup \{ x_i \in \mathcal{X}_i, F_{\mu_i}(x_i) \geq t\}$ with $F_{\mu_i}(x_i)$ denoting the cumulative distribution function of $\mu_i$. This extension is convex when the function $f$ is submodular. 

Our result is consistent with  previous reduction techniques for DR-submodular functions~\citep{ene_reduction} which depend logarithmically on $k$.
In their setting, they first apply their reduction to get a ground set of size $n \log k$, then use the classical multilinear extension which requires the sum over $2^{n \log k } = k^n$ values. On the other hand, our GME is based on the ground set of size $nk$ but requires the sum over $k^n$ values. If the classical multilinear extension was used, $2^{kn}$ values would be summed up. In short, our GME requires the same number of function evaluation as the classical multilinear extension applied to the reduced ground set.

%%%%%%%%%%%%%%%%%%%%%%%%%%%%%%%%%%%%%%%%%%%%%%%%%%%%%%%%%%%%%%%%%%%%%%%%%%%%%%%%%%%%%%%%%%%%%%%%%%%%%%%%%%%%%%%%%%%%
%%%%%%%%%%%%%%%%%%%%%%%%%%%%%%%%%%%%%%%%%%%%%%%%%%%%%%%%%%%%%%%%%%%%%%%%%%%%%%%%%%%%%%%%%%%%%%%%%%%%%%%%%%%%%%%%%%%%
%%%%%%%%%%%%%%%%%%%%%%%%%%%%%%%%%%%%%%%%%%%%%%%%%%%%%%%%%%%%%%%%%%%%%%%%%%%%%%%%%%%%%%%%%%%%%%%%%%%%%%%%%%%%%%%%%%%%
%%%%%%%%%%%%%%%%%%%%%%%%%%%%%%%%%%%%%%%%%%%%%%%%%%%%%%%%%%%%%%%%%%%%%%%%%%%%%%%%%%%%%%%%%%%%%%%%%%%%%%%%%%%%%%%%%%%%

\section{Variational Inference for  Probabilistic Integer Submodular Models}\label{categorical}

Now, we consider distributions over multisets of $\groundset$, namely $p(\x) = \tfrac{1}{Z} e^{f(\x)}$, where $\x$ is defined on the integer lattice $\x \in \{0, \ldots, k-1\}^\groundset$. We call such distributions \emph{Probabilistic Integer Submodular Models} (PISMs) and they  generalize the existing Probabilistic Submodular Models which are defined on the vertices of the unit hypercube. PISMs capture various models in practice. For example, the repulsive Potts model with  discrete labels \citep{nieuwenhuis2013survey}, and repulsive Markov random fields with categorical random variables \citep{li2009markov} are such examples. 

\looseness -1 In our case, w.l.o.g, we assume that each element of $\groundset$ has $k$ levels, therefore the corresponding multiset is defined as $\{ \underbrace{e_1, \ldots, e_1}_{k-1 \ \textrm{times}}, \ldots, \underbrace{e_{\groundset}, \ldots, e_{\groundset}}_{k-1 \ \textrm{times}}\}$. Since not having an element is also defined as a level, the multiplicity of each element is $k-1$ with $|\PMV| = k^n$. The normalization constant 
\begin{equation}
    Z = \sum_{\x \in \PMV} e^{f(\x)}
\end{equation}
is also called the partition function and involves a summation over $k^n$ terms. When $k$ and $n$ are large, computing this sum becomes intractable and variational inference~\citep{blei2017variational} is one of the widely used methods to approximate the probability density $p(\x)$. Let $\q$ be a variational distribution with its variational parameter $\rhob$. When we minimize the KL divergence between $\q$ and $p(\x)$ we have
\begin{align*}
     & \kl{\q}{p(\x)} = \sum_{\x \in \PMV}  \q \log \frac{\q}{p(\x)}\\
     &= \sum_{\x \in \PMV} \q \log \q  \\
     &- \sum_{\x \in \PMV} \q f(\x)  + \log Z.
\end{align*}
Since KL divergence is always non-negative, we get an expression which gives a lower bound for the log partition function. This term is called {\em Evidence Lower Bound~(ELBO)} and can be written as:
\begin{align*}
    \text{ELBO}(\rhob) &= \sum_{\x \in \PMV } \q f(\x) \\
    &- \sum_{\x \in \PMV} \q \log \q \leq \log Z.
\end{align*}
Usually, the ELBO  is non-convex and difficult to optimize if we do not make additional assumptions. One of the most popular assumptions is the mean-field assumption which uses a fully-factorized variational distribution. If  assuming that our variational distribution belongs to the mean-field family, i.e., $\q = \prod_{i}q_i(x_i, \rhob_i)$, where each $q_i$ is a categorical distribution with parameter $\rhob_i$, the ELBO objective becomes a sum of the GME of $f$ and the entropy of categorical distributions. Moreover, the entropy part of ELBO becomes separable and we can write:
\begin{align*}
    \text{ELBO}(\rhob) = F(\rhob) + \sum_{i \in \groundset} H(\rhob_i),
\end{align*}
where $H$ is the entropy of one dimensional categorical distribution. 
% For notational convenience we denote the total entropy by $H_T(\rhob) = \sum_{i \in \groundset} H(\rhob_i)$
The following proposition establishes the DR-submodularity of $\text{ELBO}(\rhob)$, all the omitted proofs in the sequel are deferred to \cref{appendix_proofs}.
\begin{proposition}
$\text{ELBO}(\rhob)$ is DR-submodular in $\rhob$ as long as the energy function $f$ is integer submodular.
\end{proposition}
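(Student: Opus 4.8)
The plan is to exploit the decomposition $\text{ELBO}(\rhob) = F(\rhob) + \sum_{i \in \groundset} H(\rhob_i)$ derived above, together with the fact that DR-submodularity in the twice-differentiable sense used in this paper (namely $\partial^2 g / \partial \rho_{ij}\partial\rho_{kl} \le 0$ for every pair of coordinates, consistent with \cref{prop_gme}) is preserved under addition: Hessians add, and a sum of two entrywise-nonpositive matrices is entrywise nonpositive. By \cref{prop_gme}, the first summand $F$ is DR-submodular whenever $f$ is integer submodular, so it remains only to verify that the separable entropy term $\sum_{i} H(\rhob_i)$ is DR-submodular.

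For that, fix a block $i$ and write the categorical entropy in terms of the free parameters $\rho_{i,1},\dots,\rho_{i,k-1}$ and the dependent mass $\rho_{i,0} := 1 - \sum_{j=1}^{k-1}\rho_{ij}$, so that $H(\rhob_i) = -\sum_{j=0}^{k-1}\rho_{ij}\log\rho_{ij}$. Since $H(\rhob_i)$ depends only on the coordinates of block $i$, every mixed second derivative across two distinct blocks vanishes. Within a block, differentiating once gives $\partial H(\rhob_i)/\partial\rho_{ij} = \log(\rho_{i,0}/\rho_{ij})$ — the chain-rule contribution through $\rho_{i,0}$ cancels the $-1$ coming from differentiating $-\rho_{ij}\log\rho_{ij}$ — and differentiating a second time yields $\partial^2 H(\rhob_i)/\partial\rho_{ij}\partial\rho_{il} = -1/\rho_{i,0}$ for $j\neq l$ and $-1/\rho_{i,0} - 1/\rho_{ij}$ for $j = l$. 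Both quantities are strictly negative on the relative interior of the simplex, so the within-block Hessian of $H(\rhob_i)$ is entrywise negative; hence $\sum_i H(\rhob_i)$ is DR-submodular, and adding it to $F$ gives the claim.

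The only point requiring a little care — and essentially the sole obstacle — is that the entropy is smooth only on the relative interior of each simplex, with its gradient blowing up as some $\rho_{ij}\to 0$. Because DR-submodularity here is a pointwise second-order condition, it suffices to check it on the interior and then invoke continuity of $\text{ELBO}$ on the closed product of simplices (equivalently, one may appeal to the first-order, difference-based definition of DR-submodularity, which makes sense on the boundary without differentiability). I would present the second-derivative computation on the interior and append a one-line remark handling this continuity extension.
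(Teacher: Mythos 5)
Your proposal is correct and follows essentially the same route as the paper's own proof: decompose $\text{ELBO}(\rhob) = F(\rhob) + \sum_i H(\rhob_i)$, invoke \cref{prop_gme} for the DR-submodularity of $F$, and check that the separable entropy term has entrywise nonpositive Hessian (zero across blocks, negative within a block). Your version is in fact slightly more careful than the paper's, which merely asserts the sign of the entropy's second derivatives without the explicit computation and does not remark on the boundary of the simplex.
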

Since ELBO is DR-submodular we can use various DR-submodular maximization algorithms \citep{mokhtari2017conditional,hassani2017gradient,biannips2017nonmonotone} to optimize it.

%%%%%%%%%%%%%%%%%%%%%%%%%%%%%%%%%%%%%%%%%%%%%%%%%%%%%%%%%%%%%%%%%%%%%%%%%%%%%%%%%%%%%%%%%%%%%%%%%%%%%%%%%%%%%%%%%%%%
%%%%%%%%%%%%%%%%%%%%%%%%%%%%%%%%%%%%%%%%%%%%%%%%%%%%%%%%%%%%%%%%%%%%%%%%%%%%%%%%%%%%%%%%%%%%%%%%%%%%%%%%%%%%%%%%%%%%
%%%%%%%%%%%%%%%%%%%%%%%%%%%%%%%%%%%%%%%%%%%%%%%%%%%%%%%%%%%%%%%%%%%%%%%%%%%%%%%%%%%%%%%%%%%%%%%%%%%%%%%%%%%%%%%%%%%%
%%%%%%%%%%%%%%%%%%%%%%%%%%%%%%%%%%%%%%%%%%%%%%%%%%%%%%%%%%%%%%%%%%%%%%%%%%%%%%%%%%%%%%%%%%%%%%%%%%%%%%%%%%%%%%%%%%%%

\section{Extension to Continuous Variables}

\looseness -1 It is also relevant to consider {\em continuous}
PSMs, defined using a \emph{continuous DR-submodular} function $f(\x)$ where the domain is the hypercube $[a,b]^n$. Here, the partition function becomes $Z = \int \exp(f(\x)) d\x$. For large values of $n$, this integral becomes intractable and we will use variational inference to approximate $p(\x)$. Let $q(\x,\thetab)$ be a variational distribution with its variational parameter $\thetab$. Following the same logic in the previous section, the ELBO becomes
\begin{align*}
    \text{ELBO}(\thetab) =  \underbrace {\Int q(\x,\thetab) f(\x) d\x}_{L(\thetab)} - \Int q(\x,\thetab) \log q(\x,\thetab) d\x.
\end{align*}
We again make a mean-field assumption about the variational distribution $q(\x,\thetab)$.  $\text{ELBO}(\thetab)$ is still highly non-convex in this circumstance.  In order to provably maximize the $\text{ELBO}(\thetab)$, we ideally would want it to have nice structure, such as DR-submodularity w.r.t. $\thetab$. 
This amounts to choosing some family of variational distributions $q(\x,\thetab)$ such that $\text{ELBO}(\thetab)$ is still tractable. 
Then the following important  question arises:

\textit{For which variational family is the ELBO objective  DR-submodular w.r.t. to the variational parameter $\thetab$?}

We consider firstly the first part of the ELBO ($L(\thetab)$) and give a sufficient condition such that it   becomes DR-submodular in \cref{ctsELBO}. Because of the mean-field assumption, for \emph{simple} variational distributions, the entropy part becomes separable and concave. Therefore, we obtain a DR-submodular ELBO.
\begin{proposition}\label{ctsELBO}
If $q(x, \theta)$ is a probability density function such that $q_{x}''(x, \theta) = q_{\theta}''(x, \theta)$, then we have
\begin{equation}
    L''(\theta) = \Int_{-\infty}^{+\infty} q_{\theta}''(x, \theta) f(x) dx = \Int_{-\infty}^{+\infty} q(x, \theta) f''(x) dx
\end{equation}
for sufficiently smooth $f$. This also holds in higher dimensions under the mean-field assumption.
\end{proposition}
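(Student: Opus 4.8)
The plan is to get the first equality by differentiating twice under the integral sign, and the second by substituting the hypothesis $q_x''=q_\theta''$ and integrating by parts twice in $x$ so that both derivatives land on $f$. The higher-dimensional statement then follows coordinate by coordinate, using the product structure of the mean-field family.

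First I would write $L(\theta)=\int_{-\infty}^{+\infty} q(x,\theta)f(x)\,dx$ and, under the regularity implicit in ``$q$ a density'' and ``sufficiently smooth $f$'' (joint smoothness of $q$ in $(x,\theta)$, local domination of $q_\theta$ and $q_\theta''$ by integrable functions, and at most polynomial growth of $f$), apply the Leibniz rule twice to obtain $L''(\theta)=\int_{-\infty}^{+\infty} q_\theta''(x,\theta)f(x)\,dx$, which is the first displayed identity. Then I would invoke the hypothesis $q_\theta''(x,\theta)=q_x''(x,\theta)$ to rewrite the integrand as $q_x''(x,\theta)f(x)$ and integrate by parts twice in $x$:
\[
\int_{-\infty}^{+\infty} q_x''(x,\theta)f(x)\,dx=\Big[q_x'(x,\theta)f(x)-q(x,\theta)f'(x)\Big]_{-\infty}^{+\infty}+\int_{-\infty}^{+\infty} q(x,\theta)f''(x)\,dx.
\]
Because $q(\cdot,\theta)$ is a probability density it tends to $0$ at $\pm\infty$, and the same regularity forces $q_x'(\cdot,\theta)\to 0$ there; combined with the growth bound on $f$ and $f'$ this kills the boundary term and yields $L''(\theta)=\int_{-\infty}^{+\infty} q(x,\theta)f''(x)\,dx$.

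For the mean-field case I would take $q(\x,\thetab)=\prod_{i\in\groundset} q_i(x_i,\theta_i)$ with each factor obeying $\partial_{x_i}^2 q_i=\partial_{\theta_i}^2 q_i$, and compute the diagonal Hessian entry $\partial^2 L/\partial\theta_i^2$: only $q_i$ depends on $\theta_i$, so differentiating twice and applying the coordinate identity converts $\partial_{\theta_i}^2 q_i$ into $\partial_{x_i}^2 q_i$, and since the other factors are constants for the $x_i$-integral the same two-fold integration by parts in $x_i$ goes through to give $\partial^2 L/\partial\theta_i^2=\int q(\x,\thetab)\,\partial^2 f/\partial x_i^2\,d\x$. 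In the location-family case $q_i(x_i,\theta_i)=g_i(x_i-\theta_i)$ that motivates the hypothesis one moreover has $\partial_{\theta_i}q_i=-\partial_{x_i}q_i$, so a single integration by parts in each of $x_i,x_j$ handles the off-diagonal entries and gives $\partial^2 L/\partial\theta_i\partial\theta_j=\int q(\x,\thetab)\,\partial^2 f/\partial x_i\partial x_j\,d\x$. Together with the DR-submodularity of $f$ (concavity along each coordinate and non-positive mixed partials) this shows $L$ is DR-submodular, and adding the separable, concave mean-field entropy then makes the full ELBO DR-submodular.

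The main obstacle is analytic rather than algebraic: the heart of the argument is just two integrations by parts plus the hypothesis, but making it rigorous requires pinning down the regularity under which differentiation under the integral and the vanishing of the boundary terms are both legitimate, since the statement only asks for a ``probability density'' and ``sufficiently smooth $f$''. I would therefore fix a concrete working class of variational densities (e.g.\ smooth with exponentially decaying tails and bounded derivatives) and assume $f$ grows sub-exponentially, state these as standing hypotheses, and then treat the interchange and the boundary-term estimate as routine rather than spelling them out in full detail.
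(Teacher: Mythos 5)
Your argument is correct and follows essentially the same route as the paper's proof: the identity rests on two integrations by parts that transfer the derivatives from $q$ onto $f$, combined with the hypothesis $q_x'' = q_\theta''$ and differentiation under the integral sign. You are in fact somewhat more careful than the paper, which writes out only the two integrations by parts and leaves the vanishing of the boundary terms, the interchange of differentiation and integration, and the mean-field extension implicit.
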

Now, we look at various probability distributions and for distributions such as Gaussian and Laplace, this property holds. For example, if we choose a Gaussian with fixed variance, where the variational parameter is $\mu$, we have $q(x, \mu) = \tfrac{1}{\sqrt{2 \pi \sigma^2}} \exp(-\tfrac{1}{2\sigma^2} (x-\mu)^2 )$ with $\tfrac{\partial^2 q }{\partial \mu^2} = \tfrac{\partial^2 q }{\partial x^2}$ which satisfies the above proposition. This implies that first part of the ELBO is DR-submodular. Since the differentiable entropy part is $\sum_{i \in \groundset} \log (\sigma_i \sqrt{2 \pi e})$ and does not depend on $\mu$, ELBO is also DR-submodular with respect to the variational parameter $\mu$. In this case, since there is no entropy regularizer, ELBO puts all the mass on the maximizer of $f$, which is not desirable. 

One way to solve this issue is to discretize the interval and use the categorical distribution similar as that in Section~\ref{categorical}. Then the ELBO objective becomes DR-submodular. This discretization approach is also used in \citet{Bach2019} to minimize the convex extension of submodular functions.

%%%%%%%%%%%%%%%%%%%%%%%%%%%%%%%%%%%%%%%%%%%%%%%%%%%%%%%%%%%%%%%%%%%%%%%%%%%%%%%%%%%%%%%%%%%%%%%%%%%%%%%%%%%%%%%%%%%%
%%%%%%%%%%%%%%%%%%%%%%%%%%%%%%%%%%%%%%%%%%%%%%%%%%%%%%%%%%%%%%%%%%%%%%%%%%%%%%%%%%%%%%%%%%%%%%%%%%%%%%%%%%%%%%%%%%%%
%%%%%%%%%%%%%%%%%%%%%%%%%%%%%%%%%%%%%%%%%%%%%%%%%%%%%%%%%%%%%%%%%%%%%%%%%%%%%%%%%%%%%%%%%%%%%%%%%%%%%%%%%%%%%%%%%%%%
%%%%%%%%%%%%%%%%%%%%%%%%%%%%%%%%%%%%%%%%%%%%%%%%%%%%%%%%%%%%%%%%%%%%%%%%%%%%%%%%%%%%%%%%%%%%%%%%%%%%%%%%%%%%%%%%%%%%

\section{Algorithms for Approximate Mean-Field Inference}

Approximate inference amounts to solving the following non-convex optimization problem:
\begin{equation}\label{eq_elbo_objective}
    \max_{\rhob \in \Delta^{k-1}_n} \text{ELBO}(\rhob) = F(\rhob) + H(\rhob),
\end{equation}
\looseness -1 where $H(\rhob)$ is the entropy of the $n$ categorical distributions. The optimization problem is one instance of the constrained non-monotone DR-submodular maximization problem, for which several guaranteed algorithms exist. For example, the \shrunkenfw algorithm has a $1/e$ approximation guarantee and the \twophasefw algorithm has a $1/4$ approximation guarantee \citep{biannips2017nonmonotone}. In addition to these algorithms, we present a block coordinate ascent algorithm, abbreviated as  \blockca, which tries to optimize the marginals for one categorical  random variable in each iteration. 

\subsection{The Block Coordinate Ascent Algorithm}
We summarize the main steps in \cref{alg_blockca}, where  $\sete{\rhob^{k-1}}{i_k}{\xib_{}}$ means changing the marginals of the $i_k$-th random variable to be $\xib$ while keeping all other marginals fixed. \cref{alg_blockca} starts by initializing $\rhob^0$ to be some point in the probability simplex. Then it runs for $K$ iterations.
In  iteration $k$, it optimizes the marginals of random variable $i_k$ while keeping all other marginals fixed, which is illustrated in \cref{step_i_k}. 
For the ELBO objective used in this work, one can verify that it has a closed form solution, as shown by the following proposition.

\begin{algorithm}[ht]
	\caption{The \blockca Algorithm}\label{alg_blockca} 
	\KwIn{$\max_{\rhob \in \Delta^{k-1}_n}$ \text{ELBO}($\rhob$), \# iterations $K$
	}
	
	{Initialize $\rhob^0 \in \Delta^{k-1}_n$, $k \leftarrow 1$\;}
	
	\While{$k\leq K$}{
		{let $i_k$  be the random variable being operated\;}
		
		{

		$\xib_{i_k} \leftarrow  \argmax_{\xib}
	f(\sete{\rhob^{k-1}}{i_k}{\xib})$\;\label{step_i_k}}
	
		{$\rhob^k \leftarrow \sete{\rhob^{k-1}}{i_k}{\xib_{i_k}}$\;}

		{$k \leftarrow k + 1$\;}
	}
	\KwOut{$\rhob^{K}$ }
\end{algorithm}

\begin{proposition}
When maximizing $\text{ELBO}(\rhob)$ in \cref{eq_elbo_objective}, if one only optimizes for $\rhob_{i}$ while keeping all other marginals fixed, we have the following closed form solution (let $\nabla_{ij}:=\nabla_{\rho_{ij} }F(\rhob)  $ for notational simplicity): 

\begin{align}
\rho_{ij} = \frac{\exp{( \nabla_{ij} )} }{1 +    \sum_{j'}\exp{( \nabla_{ij'} )}}, \forall j \in \{1, ..., k-1\}.
\end{align}

\end{proposition}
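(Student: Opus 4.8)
The plan is to set up the single-block optimization as a concave maximization over the simplex $\Delta^{k-1}$ and solve it via Lagrange/KKT conditions. First I would observe that, since $F$ is multilinear (each $\rho_{ij}$ enters $F$ only to the first power), fixing all marginals except $\rhob_i$ makes the restricted objective $F(\sete{\rhob^{k-1}}{i}{\rhob_i})$ an \emph{affine} function of $\rhob_i$: explicitly $F = \sum_j \nabla_{ij}\,\rho_{ij} + c$ for the gradient components $\nabla_{ij}$ defined in the statement and a constant $c$ not depending on $\rhob_i$. (This uses the fact, established inside the proof of \cref{prop_gme}, that $\partial^2 F/\partial\rho_{ij}\partial\rho_{il}=0$, so the gradient in the $\rhob_i$ block is a constant vector over the relevant range.) The entropy term contributes only $H(\rhob_i)$, since under the mean-field factorization the other $H(\rhob_{i'})$ are constants. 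So the subproblem is
\begin{align*}
\max_{\rhob_i \in \Delta^{k-1}} \ \sum_{j=1}^{k-1} \nabla_{ij}\,\rho_{ij} \;+\; H(\rhob_i).
\end{align*}

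Next I would write the categorical entropy explicitly. With $\rho_{i0} := 1 - \sum_{j=1}^{k-1}\rho_{ij}$ denoting the probability of the ``zero'' level, we have $H(\rhob_i) = -\sum_{j=0}^{k-1}\rho_{ij}\log\rho_{ij}$. The objective is then strictly concave in the interior of $\Delta^{k-1}$ (the entropy is strictly concave, the linear part is affine), so the maximizer is unique and, since the entropy's gradient blows up at the boundary, it lies in the interior; hence it is characterized by stationarity of the Lagrangian with the single equality constraint $\sum_{j=0}^{k-1}\rho_{ij}=1$. Differentiating $\sum_{j\ge1}\nabla_{ij}\rho_{ij} - \sum_{j\ge0}\rho_{ij}\log\rho_{ij} - \lambda(\sum_j \rho_{ij}-1)$ with respect to $\rho_{ij}$ for $j\ge1$ gives $\nabla_{ij} - \log\rho_{ij} - 1 - \lambda = 0$, and with respect to $\rho_{i0}$ gives $-\log\rho_{i0} - 1 - \lambda = 0$ (note $\nabla_{i0}$ is effectively $0$, since level $0$ corresponds to the reference category of the categorical distribution). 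Solving, $\rho_{ij} \propto \exp(\nabla_{ij})$ for $j\ge1$ and $\rho_{i0}\propto \exp(0)=1$; normalizing over $j\in\{0,1,\dots,k-1\}$ yields exactly
\begin{align*}
\rho_{ij} = \frac{\exp(\nabla_{ij})}{1 + \sum_{j'=1}^{k-1}\exp(\nabla_{ij'})}, \qquad j\in\{1,\dots,k-1\},
\end{align*}
which is the claimed softmax-type formula. I would close by remarking that strict concavity guarantees this stationary point is the global maximizer over $\Delta^{k-1}$.

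The main obstacle, and the place to be careful, is the bookkeeping around the reference category: the parametrization in the paper uses $\rhob_i \in \R_+^{k-1}$ with the $k$-th (or $0$-th) probability determined implicitly, so one must correctly account for how $\rho_{i0}$ depends on the free variables when differentiating both the linear term (it does not — $f$ evaluated with level $0$ contributes through the coefficient structure, so $\nabla_{i0}=0$ is really a normalization choice) and the entropy term (it does, via the chain rule $\partial\rho_{i0}/\partial\rho_{ij}=-1$). Getting this consistent is what produces the ``$1+$'' in the denominator rather than a symmetric softmax over $k$ terms; everything else is a routine Lagrange-multiplier computation. A secondary minor point is justifying that the gradient $\nabla_{ij}$ in the formula is evaluated at $\rhob^{k-1}$ and is genuinely constant in $\rhob_i$ over the feasible set, which again is immediate from multilinearity.
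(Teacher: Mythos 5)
Your proposal is correct and follows essentially the same route as the paper: exploit multilinearity to see that $F$ restricted to the block $\rhob_i$ is affine with constant gradient $\nabla_{ij}$, note the resulting objective is concave (linear term plus categorical entropy), and solve the stationarity conditions to obtain the softmax. The only cosmetic difference is that you introduce $\rho_{i0}$ explicitly with a Lagrange multiplier, whereas the paper substitutes $\rho_{i0}=1-\sum_{j'}\rho_{ij'}$ and differentiates only in the free coordinates, arriving at the same equations; your added remarks on interiority and strict concavity are a slight tightening of the paper's argument, not a new idea.
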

\looseness -1 Since the Block Coordinate Ascent algorithm can only {\em increase} the ELBO in each iteration, if we initialize it using the solutions of \shrunkenfw and \twophasefw and therefore still retain their respective approximation guarantees.

%%%%%%%%%%%%%%%%%%%%%%%%%%%%%%%%%%%%%%%%%%%%%%%%%%%%%%%%%%%%%%%%%%%%%%%%%%%%%%%%%%%%%%%%%%%%%%%%%%%%%%%%%%%%%%%%%%%%
%%%%%%%%%%%%%%%%%%%%%%%%%%%%%%%%%%%%%%%%%%%%%%%%%%%%%%%%%%%%%%%%%%%%%%%%%%%%%%%%%%%%%%%%%%%%%%%%%%%%%%%%%%%%%%%%%%%%
%%%%%%%%%%%%%%%%%%%%%%%%%%%%%%%%%%%%%%%%%%%%%%%%%%%%%%%%%%%%%%%%%%%%%%%%%%%%%%%%%%%%%%%%%%%%%%%%%%%%%%%%%%%%%%%%%%%%
%%%%%%%%%%%%%%%%%%%%%%%%%%%%%%%%%%%%%%%%%%%%%%%%%%%%%%%%%%%%%%%%%%%%%%%%%%%%%%%%%%%%%%%%%%%%%%%%%%%%%%%%%%%%%%%%%%%%

\section{Experiments}

\looseness -1 We now illustrate the efficacy of the proposed generalized multilinear extension and algorithms on two classes of applications: Revenue maximization with discrete assignments and
facility location with discrete levels. 
Both of these two applications have integer submodular objectives.  
With a probabilistic treatment using the proposed PISMs, we are able to i) provide refined probabilistic solutions using variational inference techniques, instead of the discrete solutions without a probabilistic treatment; ii) conduct parameter learning through, for example,  the maximum likelihood framework, given that we have enough training data of the corresponding objectives. Due to limit of training data, we are only going to illustrate the first benefit in this section. 

We used the oblivious step sizes $2/(t+2)$ for the \twophasefw algorithm where $t$ is the iteration index. Below, by ``one epoch'' we mean one full gradient evaluation of the objective. 
Notice that in general it costs $\mathcal{O}(k^n)$ time 
to evaluate $F(\rhob)$ precisely. In practice, we use Monte Carlo sampling to estimate $F(\rhob)$ by sampling the integer function values according to the categorical distribution. Thanks to the Hoeffding bound \citep{hoeffding1963probability}, one can easily show that the estimated function value can be arbitrarily close to true $F(\rhob)$ with polynomial number of samples. 

\subsection{Revenue Maximization with Discrete Assignments}

\looseness -1 Given a social connection graph with nodes describing $n$ users and edges describing 
their connection strength, the viral marketing strategy suggests to  choose a small subset of buyers to give
them some product for free, to trigger a cascade of further adoptions
through ``word-of-mouth'' effects, in order to maximize the total revenue \citep{hartline2008optimal}. 
\looseness -1 One model with ``discrete'' product assignments was used by \cite{soma2017non} and \citet{durr2019non}, which is motivated by the fact that giving a user more free product, the user will be more likely to advocate the product. It can be viewed as  a simplified variant of the Influence-and-Exploit (IE) strategy of \citet{hartline2008optimal}. 

\begin{figure*}[htbp]

\begin{subfigure}[t]{0.36\textwidth}
    \includegraphics[width=\linewidth]{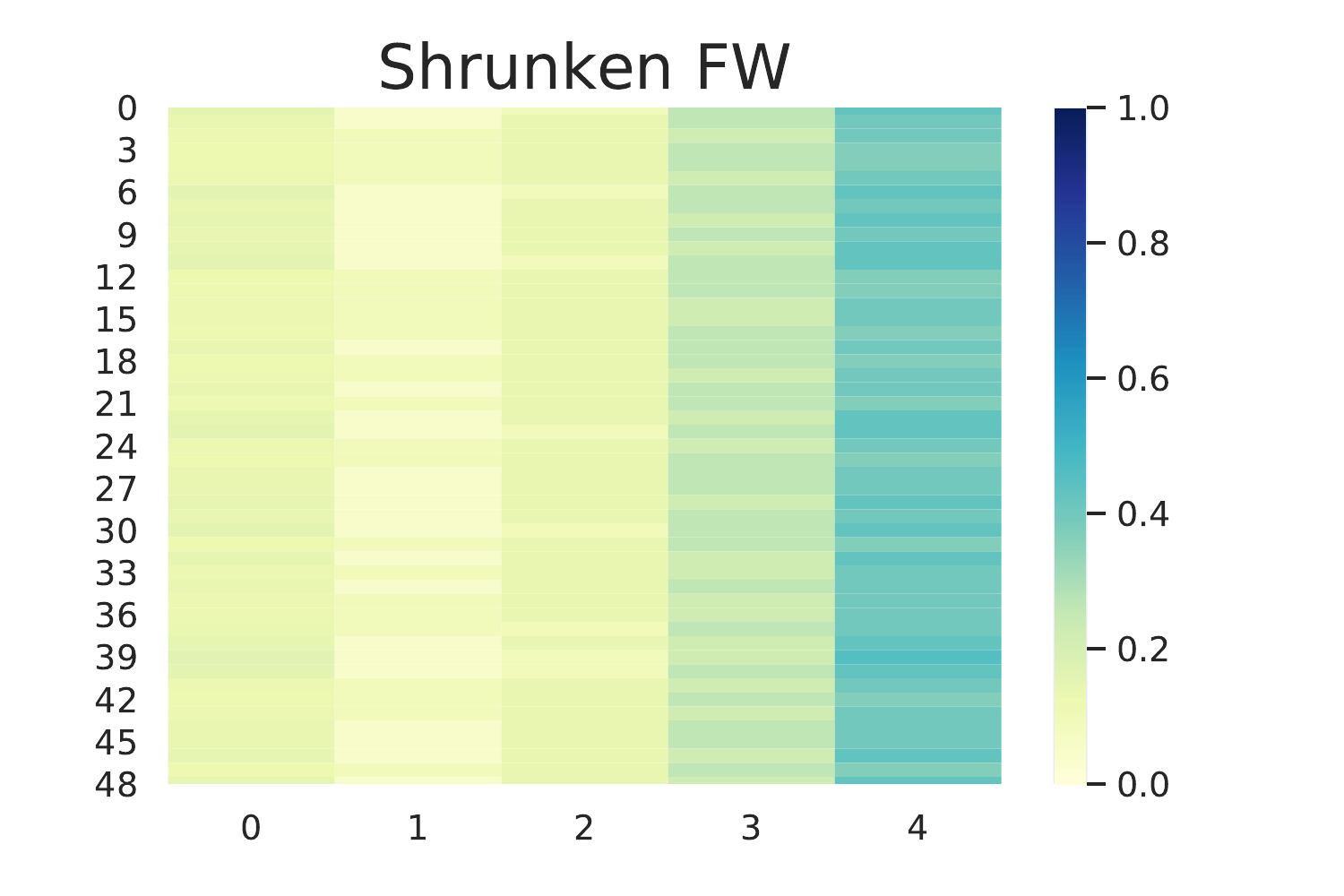}
%\caption{3-point bending}
%\label{fig:figure14_1}
\end{subfigure} \hspace{-10mm}
\begin{subfigure}[t]{0.36\textwidth}
  \includegraphics[width=\linewidth]{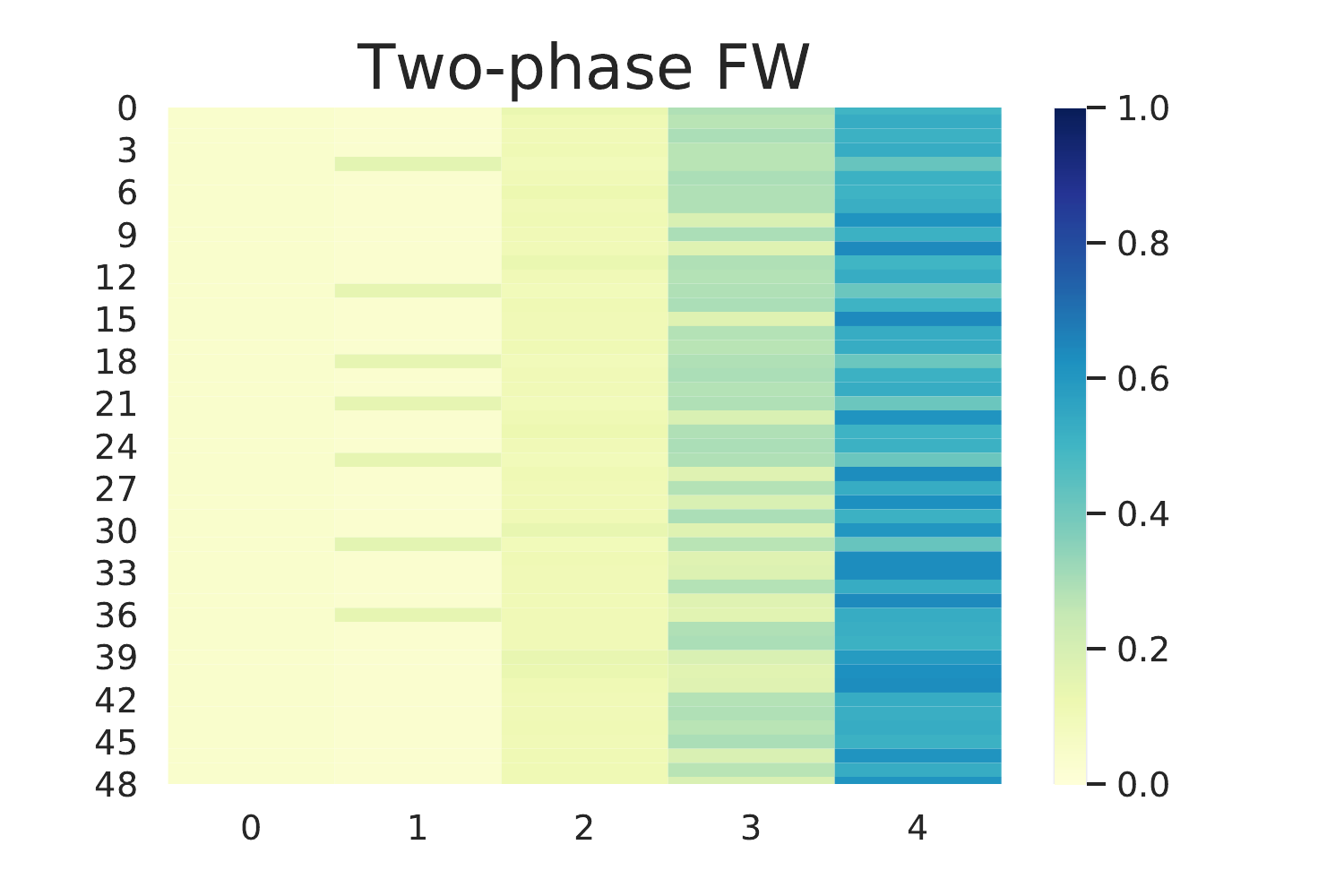}
%\caption{Elastic stress distribution}
%\label{fig:figure14_2}
\end{subfigure} \hspace{-10mm}
\begin{subfigure}[t]{0.36\textwidth}
    \includegraphics[width=\linewidth]{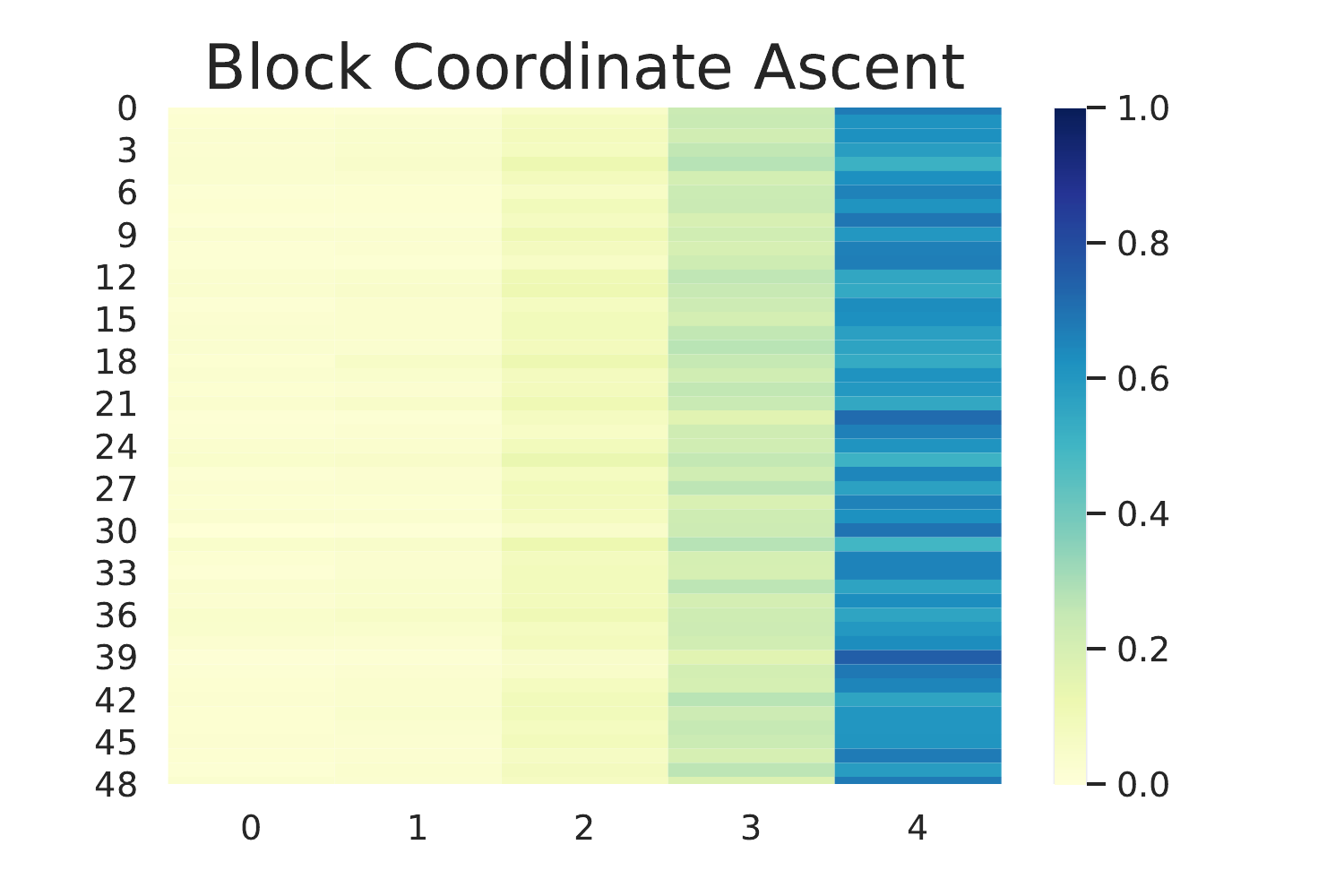}
\end{subfigure}
%\vspace{-10mm}
\begin{subfigure}[t]{0.36\textwidth}
    \includegraphics[width=\linewidth]{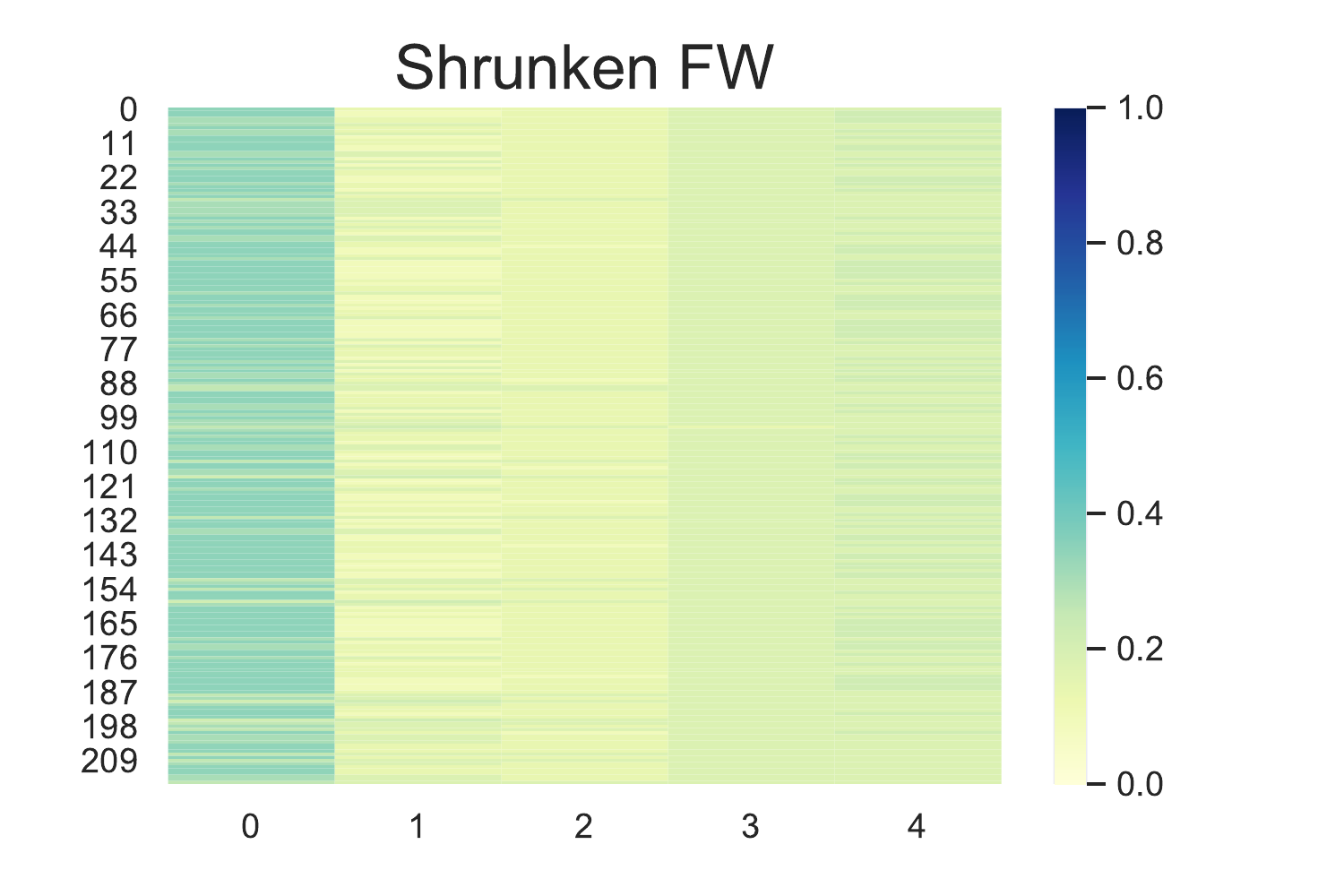}
%\caption{3-point bending}
%\label{fig:figure14_1}
\end{subfigure} \hspace{-8mm}
\begin{subfigure}[t]{0.36\textwidth}
  \includegraphics[width=\linewidth]{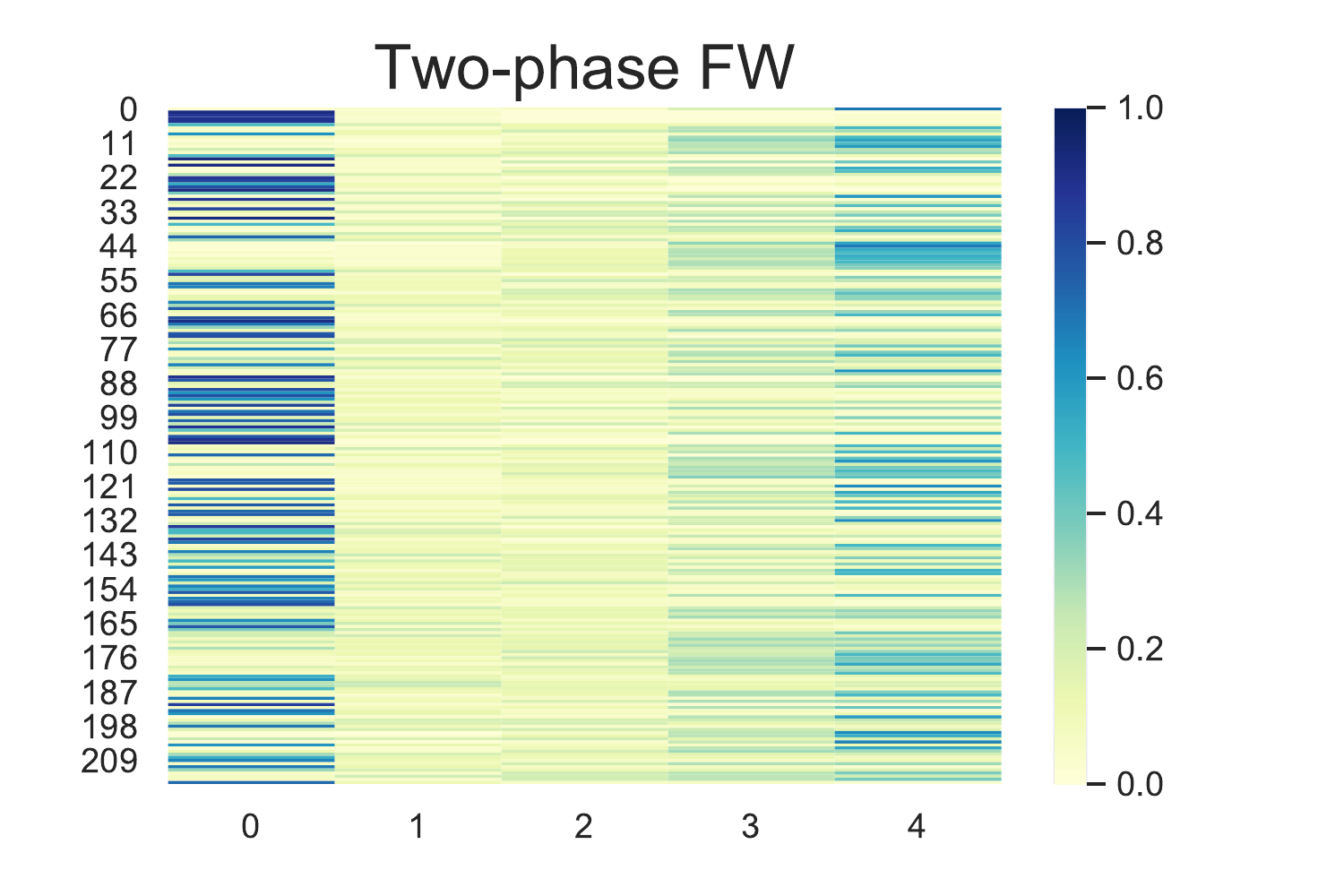}
%\caption{Elastic stress distribution}
%\label{fig:figure14_2}
\end{subfigure} \hspace{-8mm}
\begin{subfigure}[t]{0.36\textwidth}
    \includegraphics[width=\linewidth]{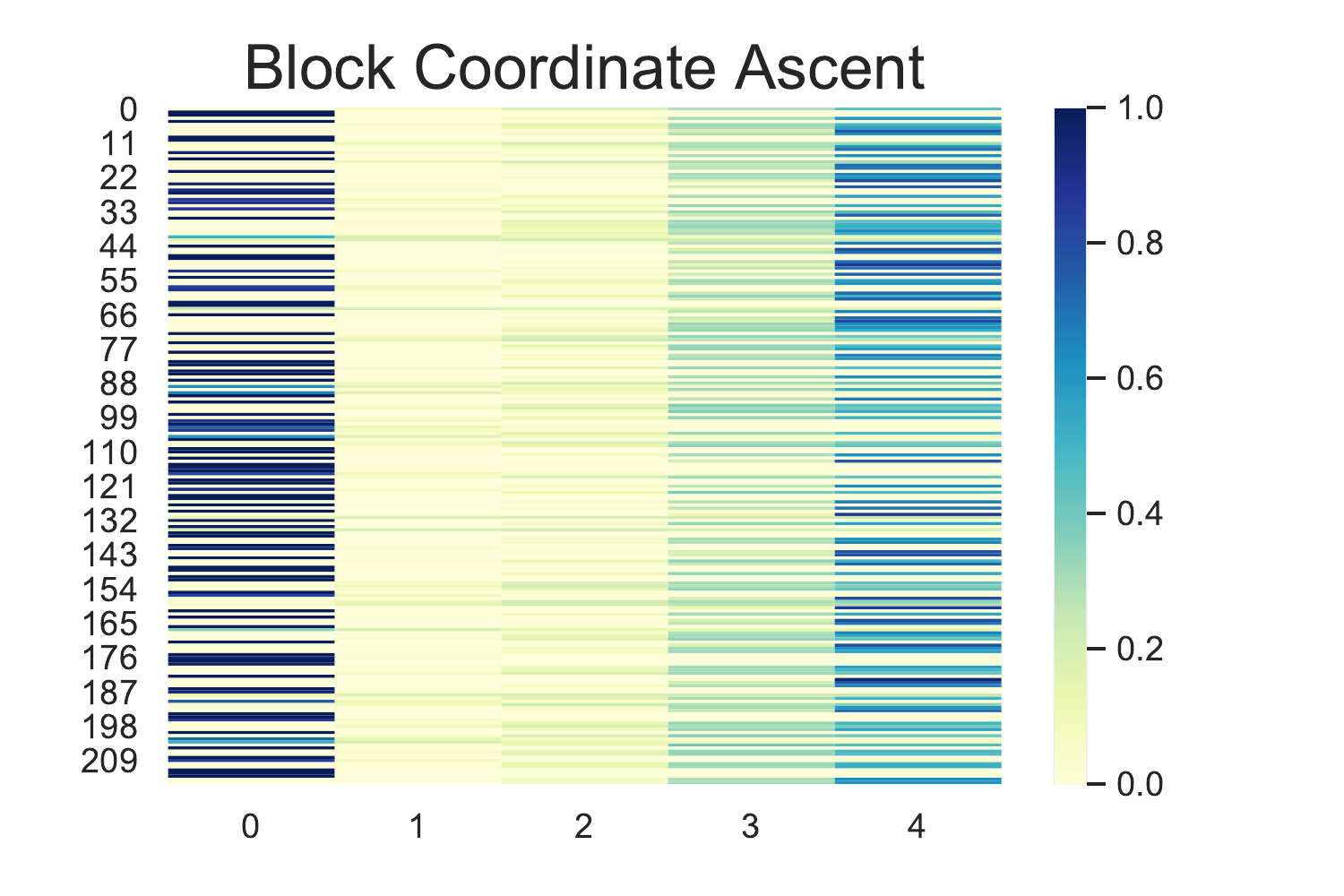}
\end{subfigure}

\looseness -1 \caption{Marginals of variational distribution for two different models: First row corresponds to the model with facility location function with 50 nodes and 5 levels. Second row corresponds to the model with revenue maximization function on the real world graph with 217 nodes and 5 levels. Each row corresponds to elements in the ground set and each column corresponds to their levels. When a box is darker, there is more probability mass assigned to that level. In both models, one can observe that the marginals of \shrunkenfw are smoother than the marginals of \blockca.}
\label{fig:marginals}
\end{figure*}

\looseness -1 In the \emph{influence} stage, for each of the user $i$, if giving him $x_i$ units of products for free, the user becomes an advocate of the product with probability $1 - q^{x_i}$ (independently from other users), where $q\in (0, 1)$ is a parameter.
In the \emph{exploit} stage: suppose that  a set $S$ of users advocate the product while the complement set $\groundset \setminus S$ of users do not. Now the revenue comes from the users in $\groundset \setminus S$, and they will be influenced by the advocates with probability proportional to 
the edge weights.
So the expected revenue is
a function $f: \Z_+^\groundset \rightarrow \R_+$: 
\begin{align}\notag 
	f(\x)  
	&= \epe[S]{\sum_{i\in S} \sum_{j\in \groundset\setminus S}W_{ij} } \\
	 & = \sum_{i\in \groundset} \sum_{j\in \groundset\setminus \{i\}} W_{ij} (1- q^{x_i})q^{x_j},
\end{align}
where $W$  is the adjacency matrix of the social connection graph. 

\begin{figure*}[!ht]
\begin{subfigure}[t]{0.34\textwidth}
    \includegraphics[width=\linewidth]{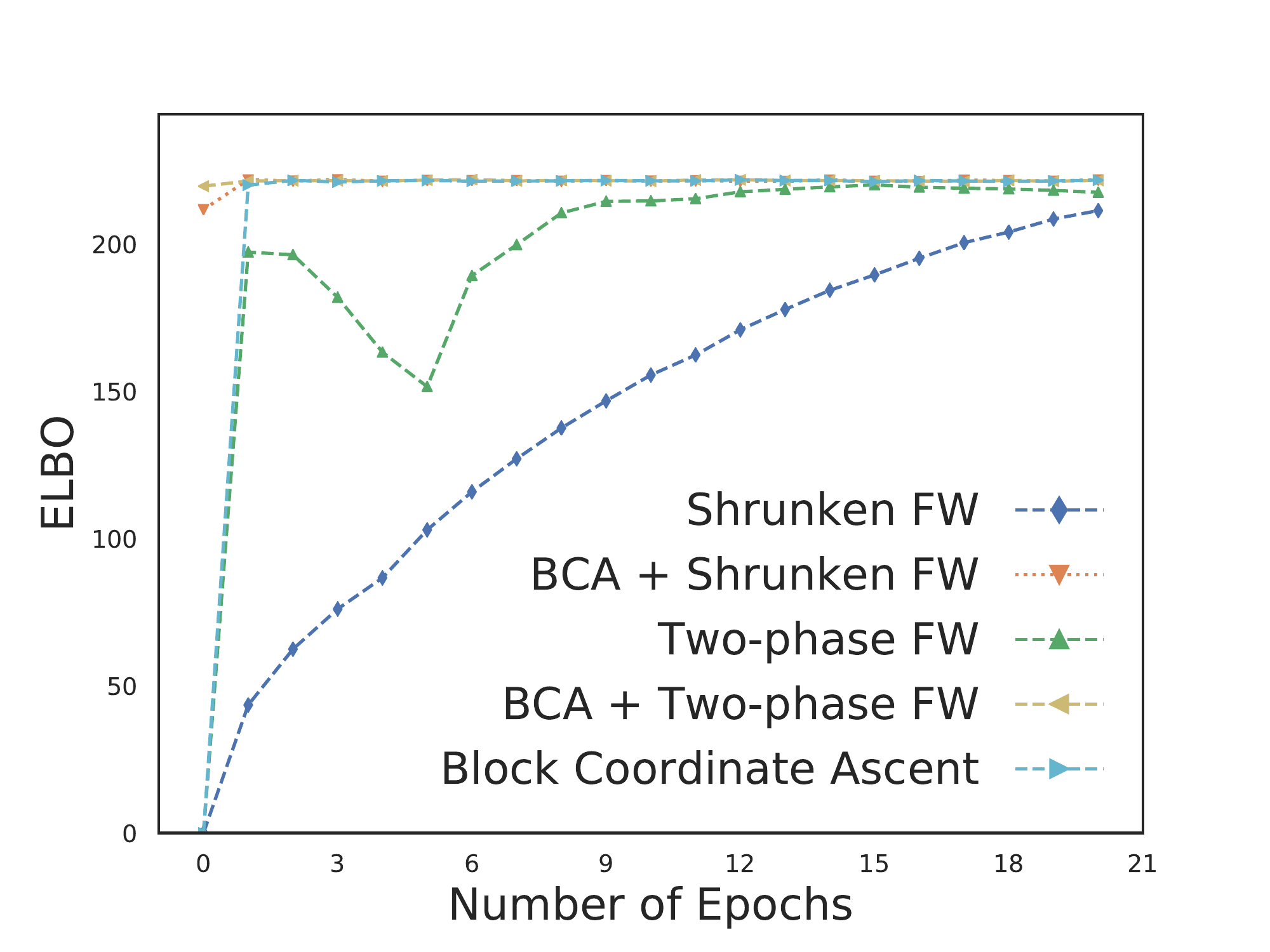}
\caption{Synthetic Dataset}
\label{fig:facility_location}
\end{subfigure} \hspace{-5mm}
\begin{subfigure}[t]{0.34\textwidth}
  \includegraphics[width=\linewidth]{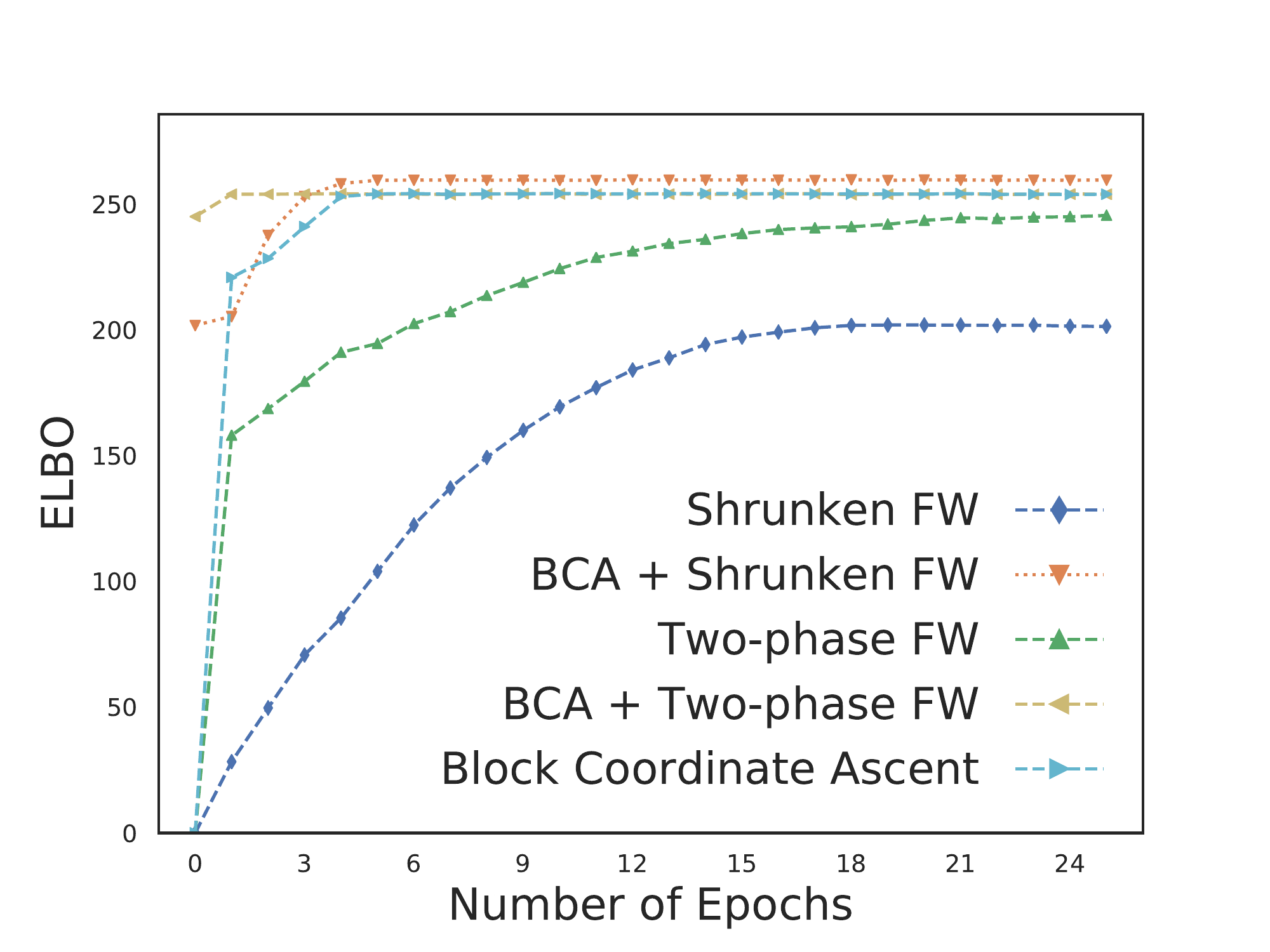}
\caption{``Football'' dataset}
\label{fig:football_trajectory}
\end{subfigure} \hspace{-5mm}
\begin{subfigure}[t]{0.34\textwidth}
    \includegraphics[width=\linewidth]{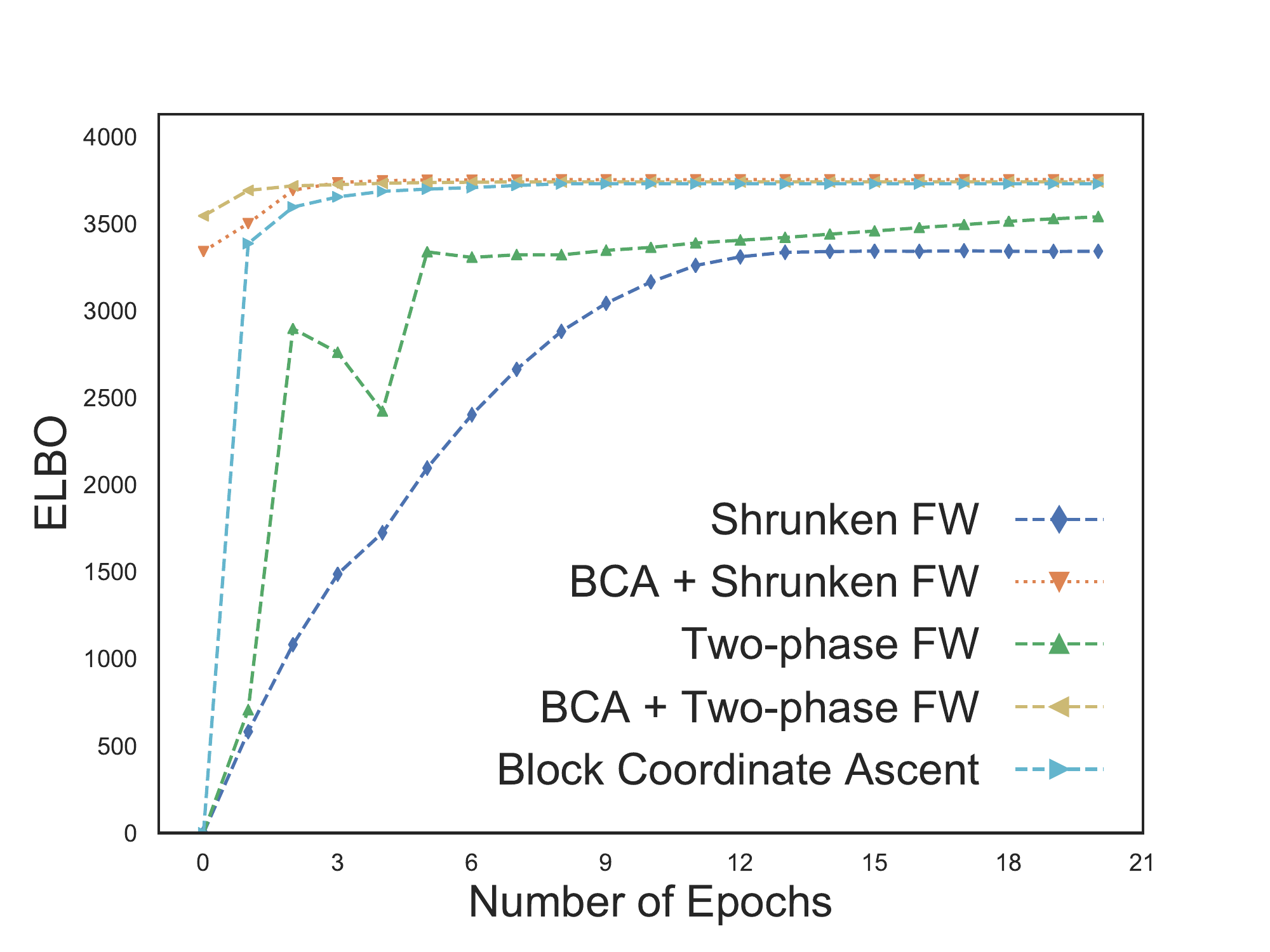}
\caption{``Residence Hall' dataset}
\label{fig:residence_trajectory}
\end{subfigure}
\caption{ELBO trajectories with different datasets. In all cases, \blockca achieves highest ELBO value and fastest convergence. \shrunkenfw is the slowest to converge and obtains the lowest ELBO value. \twophasefw obtains faster convergence than \shrunkenfw. }
\label{fig:trajectories}
\end{figure*}

\paragraph{Experimental results.}

\looseness -1 For variational inference of the PISMs which are defined through the above revenue objective, we used real-world graphs from the Konect network collection \citep{kunegis2013konect}\footnote{\url{http://konect.uni-koblenz.de/networks}}, for example,  the network of
``Residence Hall'' ($n$ = 217, \#edges = 2,672) and from Pajek dataset\footnote{\url{http://vlado.fmf.uni-lj.si/pub/networks/data/}}, for example, the network of ``Football'' ($n$ = 35, \#edges = 118).
Full information on the datasets and experimental results are deferred to \cref{appendix_experiments} due to space limit. 

\looseness -1 \cref{fig:football_trajectory} shows the trajectories of the  \shrunkenfw, \twophasefw and \blockca which is initialized by the output of the \shrunkenfw and \twophasefw for the ``Football'' dataset. One can see that \blockca type algorithms converge the fastest, \twophasefw follows, \shrunkenfw is  the last. \blockca which is initialized by the result of the \shrunkenfw attains the highest ELBO value, followed by \blockca initialized by \twophasefw. They both outperform vanilla \blockca initialized by $0$. \shrunkenfw and \twophasefw converge to the almost same value, both significantly lower than \blockca variants. The first line of \cref{fig:marginals} presents the marginals $\rhob$ of the variational distribution. We can see that \blockca tends to produce sharp marginals, while \shrunkenfw and \twophasefw provide smoother marginals. This is consistent with the intuition that Frank-Wolfe style algorithms usually generate solutions that lie in the interior of the constraints. 

\looseness -1 \subsection{Facility Location with Multiple Levels}
We use the facility location function with different levels of facilities from \citet{soma2014optimal}. In classical submodular facility location setting, we decide to either open a facility or not. Here, we have $|\groundset |$ facilities and we would like to decide on the level of the facility we want to open in order to serve $m$ number of customers. The level of the facilities is represented as integers in $\{0, \ldots, k-1 \}$. If a facility $j$ of level $c_j$ is opened, then it provides utility $w_{ij}(c_j)$ to the customer $i$. We choose $w_{ij}: \{0, \ldots, k-1\} \rightarrow \R$ as a monotone function and each customer chooses the facility with highest value. When there are multiple facilities with levels $c_j$, the total value provided to the customers is given by $\sum_{i=1}^m \max_{j \in \groundset} w_{ij}(c_j)$. This function is submodular and monotone over the integer lattice. 

\paragraph{Experimental results.}
\looseness -1 In this experiment, we use an integer facility location function with $n=50$ facilities and customers and $k=5$ levels. First, we generate matrices $L_i$ where each element follows a standard normal distribution for $i \in \{1,\ldots, k\}$. Then we compute the matrix $\tfrac{1}{n}L_i L_{i}^T$ and take their absolute values. For each level we sum up the previous utility values in order to get a monotone function. We run \shrunkenfw, \twophasefw and  variants of \blockca for $20$ epochs and the variants of \blockca achieve similar ELBO values whereas \shrunkenfw and \twophasefw obtain a lower ELBO value. In the second row of \cref{fig:marginals} we present a heatmap of the probabilities for the variational distribution $\rhob$. The $x$-axis represents different levels and the $y$-axis represents different facilities. If we look at the marginals of the \shrunkenfw, we see that there is a gradual increase in the probabilities assigned to different levels. On the other hand, \blockca assigns very high probability to the highest level and assigns lower probability to the remaining levels. This behavior can be related to the nature of the Frank-Wolfe style algorithms. Since \shrunkenfw uses convex combination of the vertices of the constraint set, the marginals are smoother than the \blockca.

%%%%%%%%%%%%%%%%%%%%%%%%%%%%%%%%%%%%%%%%%%%%%%%%%%%%%%%%%%%%%%%%%%%%%%%%%%%%%%%%%%%%%%%%%%%%%%%%%%%%%%%%%%%%%%%%%%%%
%%%%%%%%%%%%%%%%%%%%%%%%%%%%%%%%%%%%%%%%%%%%%%%%%%%%%%%%%%%%%%%%%%%%%%%%%%%%%%%%%%%%%%%%%%%%%%%%%%%%%%%%%%%%%%%%%%%%
%%%%%%%%%%%%%%%%%%%%%%%%%%%%%%%%%%%%%%%%%%%%%%%%%%%%%%%%%%%%%%%%%%%%%%%%%%%%%%%%%%%%%%%%%%%%%%%%%%%%%%%%%%%%%%%%%%%%
%%%%%%%%%%%%%%%%%%%%%%%%%%%%%%%%%%%%%%%%%%%%%%%%%%%%%%%%%%%%%%%%%%%%%%%%%%%%%%%%%%%%%%%%%%%%%%%%%%%%%%%%%%%%%%%%%%%%

\section{Related Work}

\looseness -1 The first systematic study for probabilistic models defined through submodular {\em set} functions is presented by \citet{djolonga2014map}. \citet{Djolonga:2016:VIM:3157096.3157294} develop an efficient approximate inference algorithm for more general models with both submodular and supermodular functions. \citet{gotovos2015sampling} analyze MCMC sampling to perform approximate inference in PSMs and \citet{gotovos2018discrete} introduce a new sampling strategy for accelerating mixing in PSMs.

\looseness -1 Integer and continuous submodular optimization problems attract considerable attention recently. \citet{gottschalk2015submodular} present a deterministic algorithm to maximize non-monotone submodular functions on a bounded lattice; \citet{soma2014optimal} consider maximizing monotone integer submodular functions with a knapsack constraint; and \citet{soma2017non} study the integer submodular cover problem with applications on sensor placement with discrete energy levels. \citet{qian2018multiset} consider submodular maximization problems subject to size constraints while relaxing the submodularity assumption.
\citet{bian2017guaranteed} characterize the notion of continuous submodularity and present an optimal algorithm for monotone DR-submodular maximization.
\citet{hassani2017gradient} show that the projected gradient ascent algorithm achieves a 1/2 approximation for maximizing monotone DR-submodular functions.
\citet{biannips2017nonmonotone} present the local-global relation and guaranteed  algorithms for non-monotone DR-submodular maximization. Recently, \citet{bianmf,niazadeh2018optimal} propose optimal algorithms for non-monotone DR-submodular maximization with a box constraint. 
Continuous submodular maximization is also well studied in the stochastic setting \citep{hassani2017gradient,mokhtari2018stochastic} and online setting \citep{chen2018online}.

%%%%%%%%%%%%%%%%%%%%%%%%%%%%%%%%%%%%%%%%%%%%%%%%%%%%%%%%%%%%%%%%%%%%%%%%%%%%%%%%%%%%%%%%%%%%%%%%%%%%%%%%%%%%%%%%%%%%
%%%%%%%%%%%%%%%%%%%%%%%%%%%%%%%%%%%%%%%%%%%%%%%%%%%%%%%%%%%%%%%%%%%%%%%%%%%%%%%%%%%%%%%%%%%%%%%%%%%%%%%%%%%%%%%%%%%%
%%%%%%%%%%%%%%%%%%%%%%%%%%%%%%%%%%%%%%%%%%%%%%%%%%%%%%%%%%%%%%%%%%%%%%%%%%%%%%%%%%%%%%%%%%%%%%%%%%%%%%%%%%%%%%%%%%%%
%%%%%%%%%%%%%%%%%%%%%%%%%%%%%%%%%%%%%%%%%%%%%%%%%%%%%%%%%%%%%%%%%%%%%%%%%%%%%%%%%%%%%%%%%%%%%%%%%%%%%%%%%%%%%%%%%%%%

\section{Conclusion}
\looseness -1 We considered the problem of variational inference in Probabilistic Integer Submodular Models and introduced a novel continuous extension for integer submodular functions. It can be viewed as an expectation under fully-factorized marginals and we proved that it is DR-submodular even if the integer function is just submodular. Then we introduced Probabilistic Integer Submodular Models and showed that ELBO becomes DR-submodular if we use a fully-factorized categorical distribution as the variational distribution. Finally, we presented an efficient block coordinate ascent algorithm to optimize the ELBO and showed the effectiveness of our method in real world graph mining applications.

\bibliography{refs}
\bibliographystyle{icml2020}

%%%%%%%%%%%%%%%%%%%%%%%%%%%%%%%%%%%%%%%%%%%%%%%%%%%%%%%%%%%%%%%%%%%%%%%%%%%%%%%
%%%%%%%%%%%%%%%%%%%%%%%%%%%%%%%%%%%%%%%%%%%%%%%%%%%%%%%%%%%%%%%%%%%%%%%%%%%%%%%
% DELETE THIS PART. DO NOT PLACE CONTENT AFTER THE REFERENCES!
%%%%%%%%%%%%%%%%%%%%%%%%%%%%%%%%%%%%%%%%%%%%%%%%%%%%%%%%%%%%%%%%%%%%%%%%%%%%%%%
%%%%%%%%%%%%%%%%%%%%%%%%%%%%%%%%%%%%%%%%%%%%%%%%%%%%%%%%%%%%%%%%%%%%%%%%%%%%%%%

%%%%%%%%%%%%%%%%%%%%%%%%%%%%%%%%%%%%%%%%%%%%%%%%%%%%%%%%%%%%%%%%%%%%%%%%%%%%%%%
%%%%%%%%%%%%%%%%%%%%%%%%%%%%%%%%%%%%%%%%%%%%%%%%%%%%%%%%%%%%%%%%%%%%%%%%%%%%%%%

\newpage
\newpage
\onecolumn
\appendix
\section*{Appendix}

% \section{GAUSSIAN MIXTURES}
% To counter differentiable entropy issue, instead of a simple Gaussian, we use a Gaussian mixture with fixed variance and weights $q(\x, \bmmu) = \prod_{i} q(x_i, \bmmu_i)$ where 
% \begin{equation*}
%     q(x_i, \bmmu_i)  = \sum_{k=1}^K \frac{1}{K} \frac{1}{\sqrt{2 \pi}} \exp (\frac{-(x_i - \mu_{ik} )^2}{2})
% \end{equation*}
% Unfortunately, entropy of Gaussian mixtures does not have a closed form solution and we will use a lower bound according to  Jensen's inequality~\citep{huber2008entropy}. 
% \begin{equation*}
%     H_{lower}(q(x_1, \bmmu_1)) = -\sum_{i=1}^K \frac{1}{K} \log \sum_{j=1}^K \frac{1}{K} \frac{e^{-(\mu_{1i} - \mu_{1j})^2/4}}{2\sqrt{\pi}}    
% \end{equation*}
% When using a Gaussian mixture, the first part of the ELBO is always DR-Submodular (if $f$ is DR-Submodular), however the differential entropy is not guaranteed to be DR-Submodular.

\section{Omitted Proofs}
\label{appendix_proofs}

\begin{proposition*}
If $q(x, \theta)$ is a probability density function such that $q_{x}''(x, \theta) = q_{\theta}''(x, \theta)$, then we have
\begin{equation}
    L''(\theta) = \Int_{-\infty}^{+\infty} q_{\theta}''(x, \theta) f(x) dx = \Int_{-\infty}^{+\infty} q(x, \theta) f''(x) dx
\end{equation}
for sufficiently smooth $f$. This also holds in higher dimensions under the mean-field assumption.
\end{proposition*}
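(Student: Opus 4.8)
The statement bundles two elementary facts — differentiation under the integral sign, then integration by parts — with a coordinatewise bootstrap for the mean-field case; the only real subtlety is tracking which regularity hypotheses on $f$ and $q$ make each step legal, and this is exactly what ``sufficiently smooth $f$'' should be read to supply. My plan is to carry out the one-dimensional identity in three short steps and then replay it one coordinate at a time.

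First, write $L(\theta)=\int_{-\infty}^{+\infty} q(x,\theta)f(x)\,dx$ and differentiate twice under the integral sign, which is precisely the first claimed equality $L''(\theta)=\int_{-\infty}^{+\infty} q_\theta''(x,\theta)f(x)\,dx$. Legitimacy is the usual Leibniz-rule condition: $q,q_\theta',q_\theta''$ jointly continuous, and, on a neighbourhood of the $\theta$ in question, a fixed integrable dominating function for $|q_\theta''(x,\theta)f(x)|$ and for $|q_\theta'(x,\theta)f(x)|$; for the families relevant here (Gaussian, Laplace) and $f$ of at most polynomial growth this is automatic. Next, invoke the hypothesis $q_\theta''(x,\theta)=q_x''(x,\theta)$ pointwise, so $L''(\theta)=\int_{-\infty}^{+\infty} q_x''(x,\theta)f(x)\,dx$. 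Then integrate by parts twice in $x$:
\begin{align*}
\int_{-\infty}^{+\infty} q_x''(x,\theta)f(x)\,dx
&=\Big[q_x'(x,\theta)f(x)\Big]_{-\infty}^{+\infty}-\int_{-\infty}^{+\infty} q_x'(x,\theta)f'(x)\,dx\\
&=-\Big[q(x,\theta)f'(x)\Big]_{-\infty}^{+\infty}+\int_{-\infty}^{+\infty} q(x,\theta)f''(x)\,dx ,
\end{align*}
and note that both boundary terms vanish: a smooth integrable density $q(\cdot,\theta)$ and its $x$-derivative tend to $0$ at $\pm\infty$ faster than $f$ and $f'$ can grow. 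This is the second equality.

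For the mean-field extension take $q(\x,\thetab)=\prod_{l=1}^n q_l(x_l,\theta_l)$. Differentiating $L(\thetab)=\int q(\x,\thetab)f(\x)\,d\x$ twice in a single $\theta_i$ touches only the factor $q_i$, so applying the one-dimensional identity to $q_i$ and integrating by parts twice in $x_i$ (the other factors fixed) gives $\partial^2_{\theta_i}L=\int q(\x,\thetab)\,\partial^2_{x_i}f(\x)\,d\x$. The mixed partials $\partial^2_{\theta_i\theta_j}L$, $i\neq j$, go through the same way once each $q_l$ additionally obeys the first-order relation $\partial_{\theta_l}q_l=-\partial_{x_l}q_l$ — which, together with the stated hypothesis, holds for any location family $q_l(x_l,\theta_l)=g_l(x_l-\theta_l)$, in particular for the Gaussian and Laplace choices used in the paper — yielding $\partial^2_{\theta_i\theta_j}L=\int q(\x,\thetab)\,\partial^2_{x_ix_j}f(\x)\,d\x$. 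Hence, when $f$ is continuous DR-submodular so that $\partial^2_{x_ix_j}f\le 0$ for all $i,j$, the Hessian of $L$ is entrywise non-positive, i.e.\ $L$ is DR-submodular.

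The main obstacle is not any single computation but making the regularity precise: both exchanging $\frac{d^2}{d\theta^2}$ with the integral and discarding the boundary terms require that $f,f',f''$ stay controlled relative to the decay of $q$ and $q_x'$ at infinity. The cleanest way to present the proof is to state those growth/decay conditions explicitly as what ``sufficiently smooth $f$'' means, after which each step is textbook and the concrete variational families used in the experiments satisfy them.
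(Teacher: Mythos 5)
Your proof is correct and rests on the same core argument as the paper's: two integrations by parts in $x$, combined with differentiation under the integral sign and the hypothesis $q_x''=q_\theta''$ (the paper runs the integration by parts in the opposite direction, from $\int q f''$ to $\int q_x'' f$, which is immaterial). You go beyond the paper's proof in two worthwhile ways: you state explicitly the domination and decay conditions that justify the Leibniz rule and the vanishing of the boundary terms, and --- more substantively --- you observe that the higher-dimensional claim for the \emph{mixed} partials $\partial^2_{\theta_i\theta_j}L$ does not follow from the second-order hypothesis alone but needs the first-order location-family relation $\partial_{\theta_l}q_l=-\partial_{x_l}q_l$; the paper's proof treats only the one-dimensional identity and leaves the mean-field extension unargued.
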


\begin{proof}
Integrate by parts twice. First, you have
\begin{align*}
    \Int_{-\infty}^{+\infty} q(x, \theta) f''(x) dx = q(x, \theta) f'(x) \Big]_{x=-\infty}^{x=+\infty} - \Int_{-\infty}^{+\infty} q_{x}'(x, \theta) f'(x) dx
\end{align*}
using $u = q(x, \theta)$ and $v = f'(x)$. Then integrate by part again 
\begin{align*}
    \Int_{-\infty}^{+\infty} q(x, \theta) f''(x) dx = q(x, \theta) f'(x) \Big]_{x=-\infty}^{x=+\infty} - q_{x}'(x, \theta) f(x) \Big]_{x=-\infty}^{x=+\infty} + \Int_{-\infty}^{+\infty} q_{x}''(x, \theta) f(x)dx
\end{align*}
using $u = q_{x}'(x, \theta)$ and $v = f(x)$
\end{proof}

\begin{proposition*}
$\text{ELBO}(\rhob)$ is DR-Submodular in $\rhob$.
\end{proposition*}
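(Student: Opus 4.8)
The plan is to use the additive decomposition $\text{ELBO}(\rhob) = F(\rhob) + \sum_{i\in\groundset} H(\rhob_i)$ established above, together with the elementary fact that DR-submodularity of a twice-differentiable function on $\Delta^{k-1}_n$ is exactly the requirement that every mixed second partial $\partial^2/\partial\rho_{ij}\partial\rho_{kl}$ be non-positive, a condition that is closed under addition. Hence it suffices to show separately that $F$ and the entropy term are DR-submodular.

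For $F$ this is immediate: \cref{prop_gme}, part~2, gives DR-submodularity of the GME whenever $f$ is integer submodular, which is precisely our hypothesis. It therefore remains to treat the separable term $\sum_{i\in\groundset}H(\rhob_i)$. Since this term is separable across the blocks $i$, all mixed partials across distinct blocks vanish (trivially $\le 0$), and we only need to inspect the Hessian of a single categorical entropy $H(\rhob_i) = -\sum_{j=1}^{k-1}\rho_{ij}\log\rho_{ij} - \rho_{i0}\log\rho_{i0}$, where $\rho_{i0} := 1-\sum_{j=1}^{k-1}\rho_{ij}$ is the mass on level~$0$. A one-line computation gives $\partial H(\rhob_i)/\partial\rho_{ij} = \log(\rho_{i0}/\rho_{ij})$, hence $\partial^2 H(\rhob_i)/\partial\rho_{ij}\partial\rho_{il} = -1/\rho_{i0}$ for $l\neq j$ and $\partial^2 H(\rhob_i)/\partial\rho_{ij}^2 = -1/\rho_{i0} - 1/\rho_{ij}$; both are strictly negative on the interior of $\Delta^{k-1}$. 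So every entry of the Hessian of $\sum_i H(\rhob_i)$ is $\le 0$, i.e.\ the entropy term is DR-submodular (in fact strictly so within each block, which captures both coordinate-wise concavity and within-block submodularity of the entropy).

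Putting the pieces together, $\text{ELBO}(\rhob)$ is a sum of two DR-submodular functions and is therefore DR-submodular. I do not expect a genuine obstacle here; the only point deserving a word of care is the boundary of $\Delta^{k-1}_n$, where some $\rho_{ij}\to 0$ and the entropy's second derivatives diverge. This is handled by observing that $\text{ELBO}$ is continuous on the compact simplex $\Delta^{k-1}_n$ and DR-submodular on its relative interior, so the defining inequalities extend to the boundary by continuity (alternatively one can verify the difference-based form of DR-submodularity for $H$ directly, avoiding derivatives, but the Hessian argument is the cleaner route). The heart of the proof is thus just the short entropy-Hessian calculation combined with the already-proved DR-submodularity of the GME.
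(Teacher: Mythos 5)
Your proof is correct and follows essentially the same route as the paper's: decompose $\text{ELBO}(\rhob)=F(\rhob)+\sum_i H(\rhob_i)$, invoke the already-established DR-submodularity of $F$ (non-positive mixed partials, zero within a block), and check that the categorical entropy has a non-positive Hessian within each block and vanishing cross-block terms. The only difference is that you carry out the entropy Hessian computation explicitly and add a remark about the boundary of the simplex, both of which the paper leaves implicit.
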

\begin{proof}
We already proved that $\frac{\partial^2 F }{\partial \rho_{ij} \partial \rho_{kl}} = 0$ when $i=k$ for all $j,l$ and $\frac{\partial^2 F }{\partial \rho_{ij} \partial \rho_{kl}} \leq 0$ when $i \neq k$ for all $j,l$.  On the other hand, $\frac{\partial^2 H_T }{\partial \rho_{ij} \partial \rho_{kl}} \leq 0$ when $i=k$ for all $j,l$ and $\frac{\partial^2 H_T }{\partial \rho_{ij} \partial \rho_{kl}} = 0$ when $i \neq k$ for all $j,l$ since $H(\rhob_i) = -(\rho_{i1} \log \rho_{i1} + \rho_{i2} \log \rho_{i2} + \ldots + (1-\rho_{i1} - \ldots - \rho_{i, k-1}) \log (1-\rho_{i1} - \ldots - \rho_{i, k-1}) )$. Therefore $\frac{\partial^2 \text{ELBO}}{\partial \rho_{ij} \partial \rho_{kl}} \leq  0$ for all $i,j,k,l$.
\end{proof}

\begin{proposition*}
Considering maximizing $\text{ELBO}(\rhob)$ in \cref{eq_elbo_objective}, if one only optimize for $\rhob_{i}$ while keeping all other marginals fixed, we have the following closed form solution (let $\nabla_{ij}:=\nabla_{\rho_{ij} }F(\rhob)  $ for notational simplicity): 

\begin{align}
\rho_{ij} = \frac{\exp{( \nabla_{ij} )} }{1 +    \sum_{j'}\exp{( \nabla_{ij'} )}}, \forall j \in \{1, ..., k-1\}.
\end{align}

\end{proposition*}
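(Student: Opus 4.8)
The plan is to write out the piece of the ELBO that depends on $\rhob_i$, take its gradient, set it to zero, and solve the resulting system. Because $F$ is multilinear in $\rhob_i$ (each coordinate appears at most to the first power once we fix the other blocks), the restriction $\rhob_i \mapsto \text{ELBO}(\rhob^{k-1}|_i(\rhob_i))$ is an \emph{affine} function of $\rhob_i$ plus the entropy term $H(\rhob_i)$; that is, it has the form $\innerprod{\nabla_i F(\rhob), \rhob_i} + c + H(\rhob_i)$, where $\nabla_i F(\rhob)$ is the block of partials $\nabla_{ij}$ (which do not depend on $\rho_{ij}$ themselves, only on the other fixed blocks) and $c$ collects the constant. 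This is a strictly concave function of $\rhob_i$ on the simplex $\Delta^{k-1}$ — strictly concave because $H$ is strictly concave — so it has a unique maximizer, and it suffices to exhibit a stationary point of the concave objective.

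The key computation is the partial derivative of the entropy. Using the explicit form $H(\rhob_i) = -\sum_{j=1}^{k-1}\rho_{ij}\log\rho_{ij} - \rho_{i0}\log\rho_{i0}$ with $\rho_{i0}:=1-\sum_{j=1}^{k-1}\rho_{ij}$, one gets $\frac{\partial H(\rhob_i)}{\partial \rho_{ij}} = -\log\rho_{ij} + \log\rho_{i0}$ (the $-1$ terms from the two products cancel because of the chain-rule contribution of $\rho_{i0}$). Setting $\frac{\partial}{\partial\rho_{ij}}\text{ELBO} = \nabla_{ij} - \log\rho_{ij} + \log\rho_{i0} = 0$ gives $\rho_{ij} = \rho_{i0}\exp(\nabla_{ij})$ for every $j\in\{1,\dots,k-1\}$. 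Summing this identity over $j$ and using $\sum_{j=0}^{k-1}\rho_{ij}=1$ yields $1 - \rho_{i0} = \rho_{i0}\sum_{j'}\exp(\nabla_{ij'})$, hence $\rho_{i0} = \bigl(1+\sum_{j'}\exp(\nabla_{ij'})\bigr)^{-1}$, and substituting back gives exactly the claimed softmax-type formula.

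I would then note two small points for completeness: first, the stationary point obtained lies in the relative interior of $\Delta^{k-1}$ (all coordinates are strictly positive), so it is a genuine interior maximizer rather than a boundary one, and by strict concavity it is the \emph{unique} global maximizer of the block subproblem; second, that the $\nabla_{ij}$ are well-defined constants here precisely because $F$ is multilinear, so $\nabla_{\rho_{ij}}F$ is independent of $\rho_{i1},\dots,\rho_{i,k-1}$, which is what makes the fixed-point equation solvable in closed form rather than implicit.

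The only mildly delicate step is the entropy-derivative bookkeeping — keeping track of the implicit dependence of $\rho_{i0}$ on the free coordinates so that the stray constants cancel correctly; everything else is a routine Lagrange/first-order argument. I do not anticipate a genuine obstacle, since strict concavity of the block objective removes any question of whether the stationary point is the maximizer.
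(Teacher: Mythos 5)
Your proposal is correct and follows essentially the same route as the paper's proof: exploit the linearity of $F$ in the block $\rhob_i$ so that the $\nabla_{ij}$ are constants, note concavity from the entropy term, set the first-order condition $\nabla_{ij} + \log\bigl(\rho_{i0}/\rho_{ij}\bigr) = 0$, and solve to obtain the softmax form. Your additional remarks on strict concavity, interiority, and uniqueness of the maximizer make explicit what the paper leaves implicit, but the argument is the same.
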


\begin{proof}
Firstly notice that the generalized multilinear extension $F(\rhob)$ is linear in terms of each $\rho_{ij}$, and it is separable for $\rho_{i1}, \rho_{i2}, ..., \rho_{i,k-1} $ for a fixed $i$. 
So if we fix all of the other marginals except for $\rhob_{i}$, $\nabla_{ij}$ will be a constant for all $j \in [k-1]$. 

Secondly the entropy term $H(\rhob)$ is concave in terms of $\rho_{ij}$,  so the $\text{ELBO}(\rhob)$
is concave in terms of $\rho_{ij}$. In order to find the maximizer of this $(k-1)$ dimensional concave function, we just need to set $\nabla_{\rhob_i} \text{ELBO}(\rhob)$ to be zero.
One can verify that 
\begin{align}
& \nabla_{ij} \text{ELBO}(\rhob) = \nabla_{ij} +
\nabla_{ij} H(\rhob)\\\notag 
& = \nabla_{ij} +  \log \frac{1 - \sum_{j'} \rho_{ij'}}{\rho_{ij} } \overset{!}{=}0, \forall j \in \{1, ..., k-1\}.
\end{align}
Solving the above $k-1$ equations, we get that $\rho_{ij} = \frac{\exp{( \nabla_{ij} )} }{1 +    \sum_{j'}\exp{( \nabla_{ij'} )}}$. 

Lastly, with this update rule, the simplex constraints are always satisfied. Because after this update, we have 
\begin{align}
 \sum_{j'} \rho_{ij'} & 
= \frac{\sum_{j'} \exp{(\nabla_{ij} )}  }{1 +\sum_{j'} \exp{(\nabla_{ij} )}} \\ 
& < 1.
\end{align}
\end{proof}

\section{More on Experiments}
\label{appendix_experiments}

The graph datasets and corresponding experimental parameters are documented in the following table: 

 \begin{table}[htbp]
 	\begin{center}
% 		\begin{tabularx}{|l|l|l|l|c|c|}
\begin{tabularx}{\textwidth}{|r|X|X|X|X|}
 			\hline
 			Dataset &   $n$  & \#edges & $q$ & \#categories  \\
 			\hline
 			\hline 
 			``Seventh graders'' & 29 & 376 &  0.7 & 6   \\
 			\hline
 			``Highschool'' & 70 & 366 & 0.2  &  10 \\
 			\hline 	
 		    ``Reality Mining''  & 96 & 1,086,404 (multiedge) &   0.75 &   6  \\
 			\hline
 			``Residence hall'' & 217 & 2,672 & 0.75  &  10  \\
 			\hline
 			``Infectious'' & 410 & 17,298 & 0.7  & 6  \\
 			\hline 			
 		\end{tabularx}
 		%\small 
 		\caption{Graph datasets and corresponding experimental parameters}
 		\label{tab_dataset}
 	\end{center}
 \end{table}

In the plots for the marginals, we always observe the same behaviour: Shrunken FW gives smoother marginals than the Block CA. When we look at the trajectories, we see that Block CA always obtains the highest ELBO value and converges the fastest. Shrunken FW usually obtains slower convergence than Two Phase FW and obtains lower ELBO values.

%  ($n$ = 70, \#edges = 366),  

% ``U. Rovira i Virgili'' ($n$= 1133, \#edges = 5451) (we are still running the code for this graph). For the ``Highschool'' dataset, we set $q = 0.2$ and $B = 10$, for the 
% ``Seventh graders'' dataset, we set $q = 0.7$ and $B = 6$. 

\begin{figure*}[htbp]
\begin{subfigure}[t]{0.33\textwidth}
    \includegraphics[width=\linewidth]{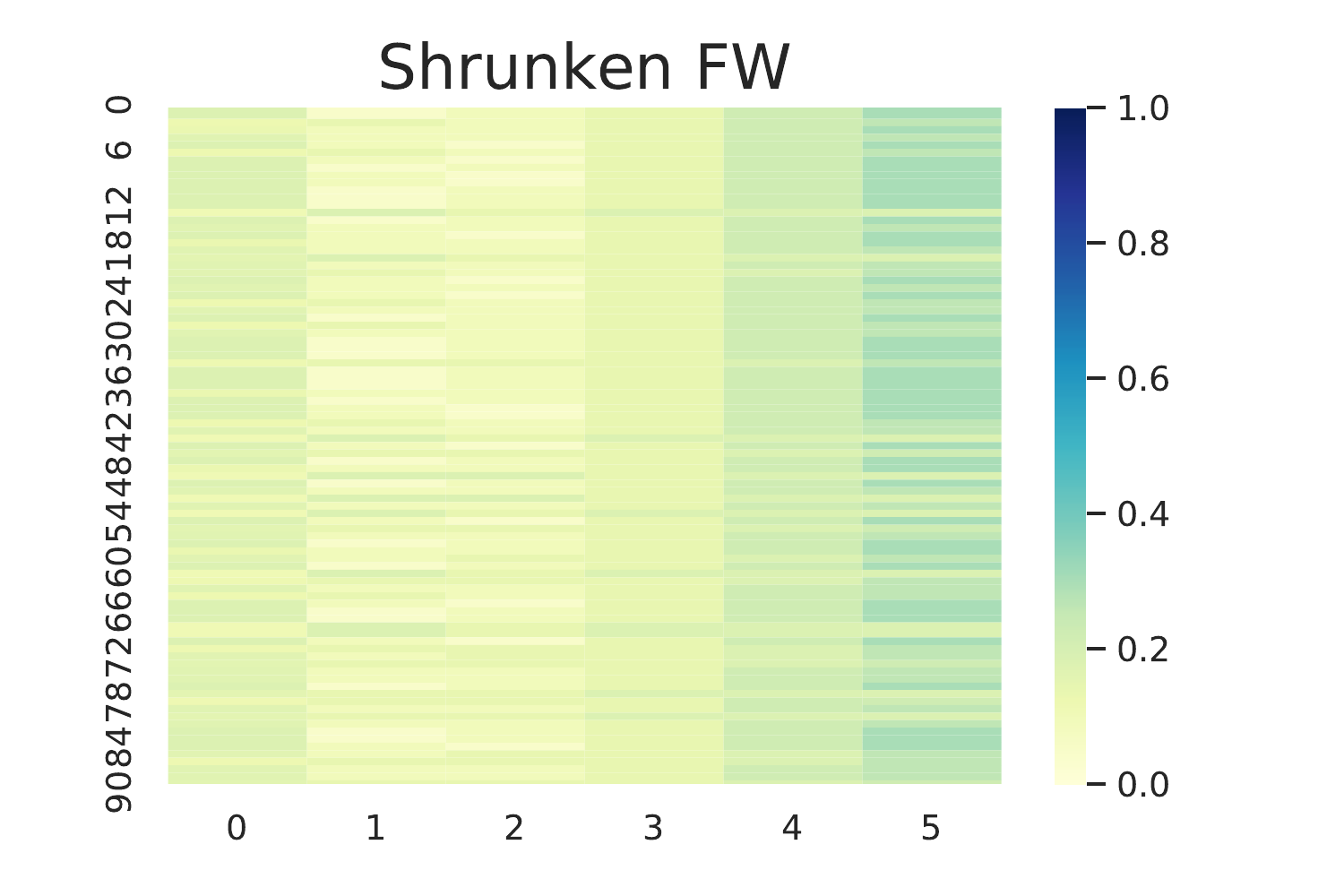}
%\caption{3-point bending}
%\label{fig:figure14_1}
\end{subfigure}\hfill
\begin{subfigure}[t]{0.33\textwidth}
  \includegraphics[width=\linewidth]{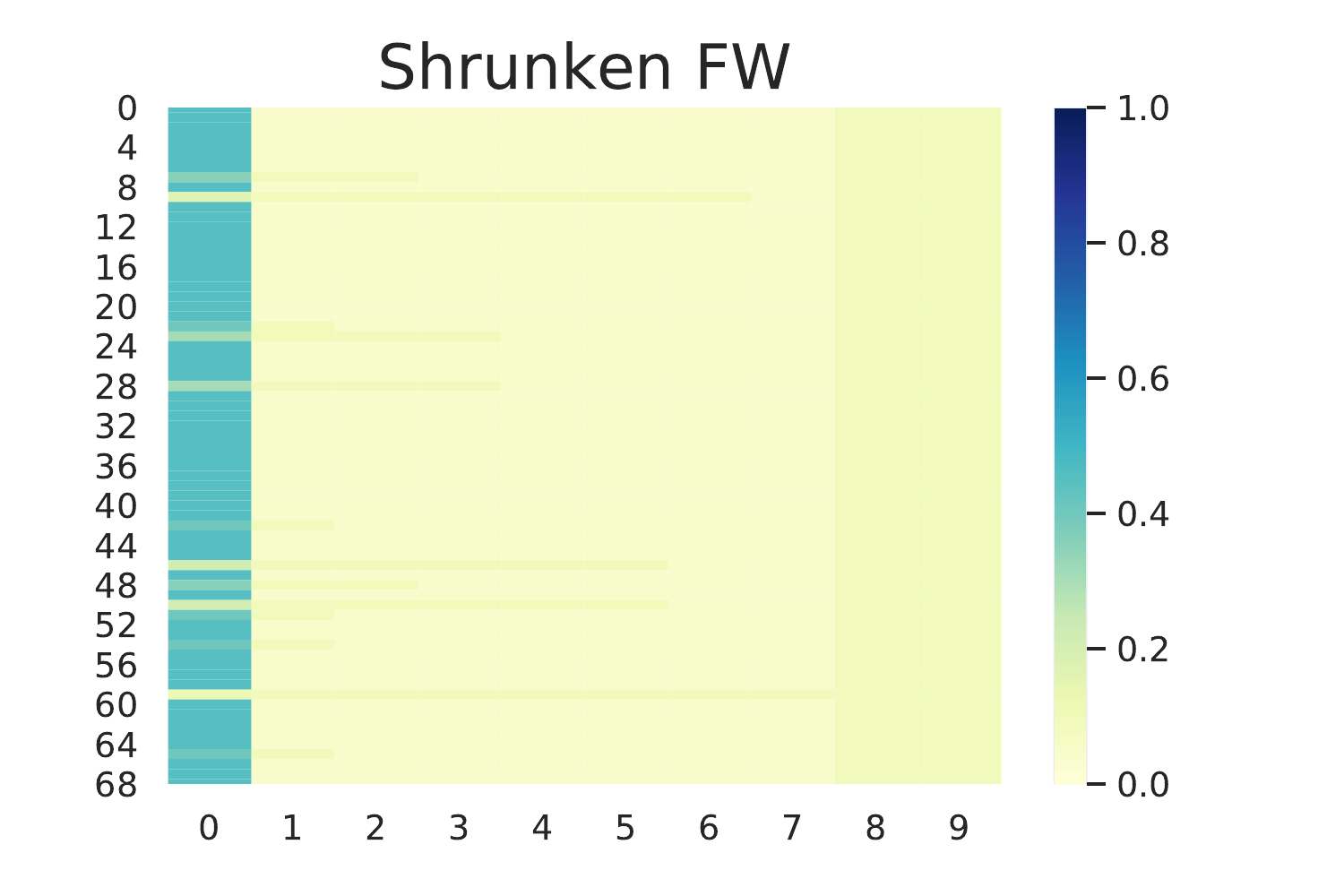}
%\caption{Elastic stress distribution}
%\label{fig:figure14_2}
\end{subfigure}\hfill
\begin{subfigure}[t]{0.33\textwidth}
    \includegraphics[width=\linewidth]{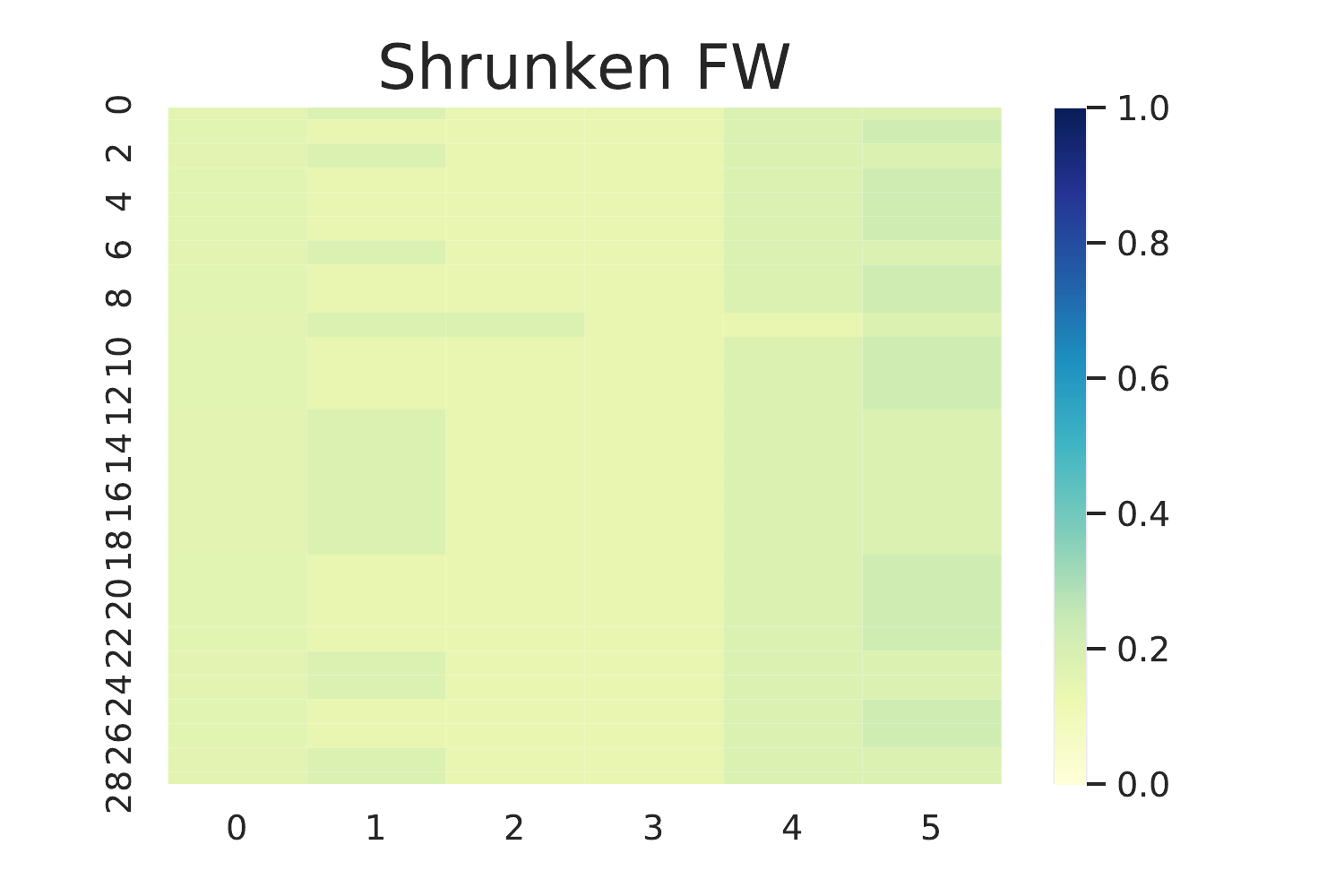}
%\caption{Plastic stress distribution}
%\label{fig:figure14_3}
\end{subfigure}
\begin{subfigure}[t]{0.33\textwidth}
    \includegraphics[width=\linewidth]{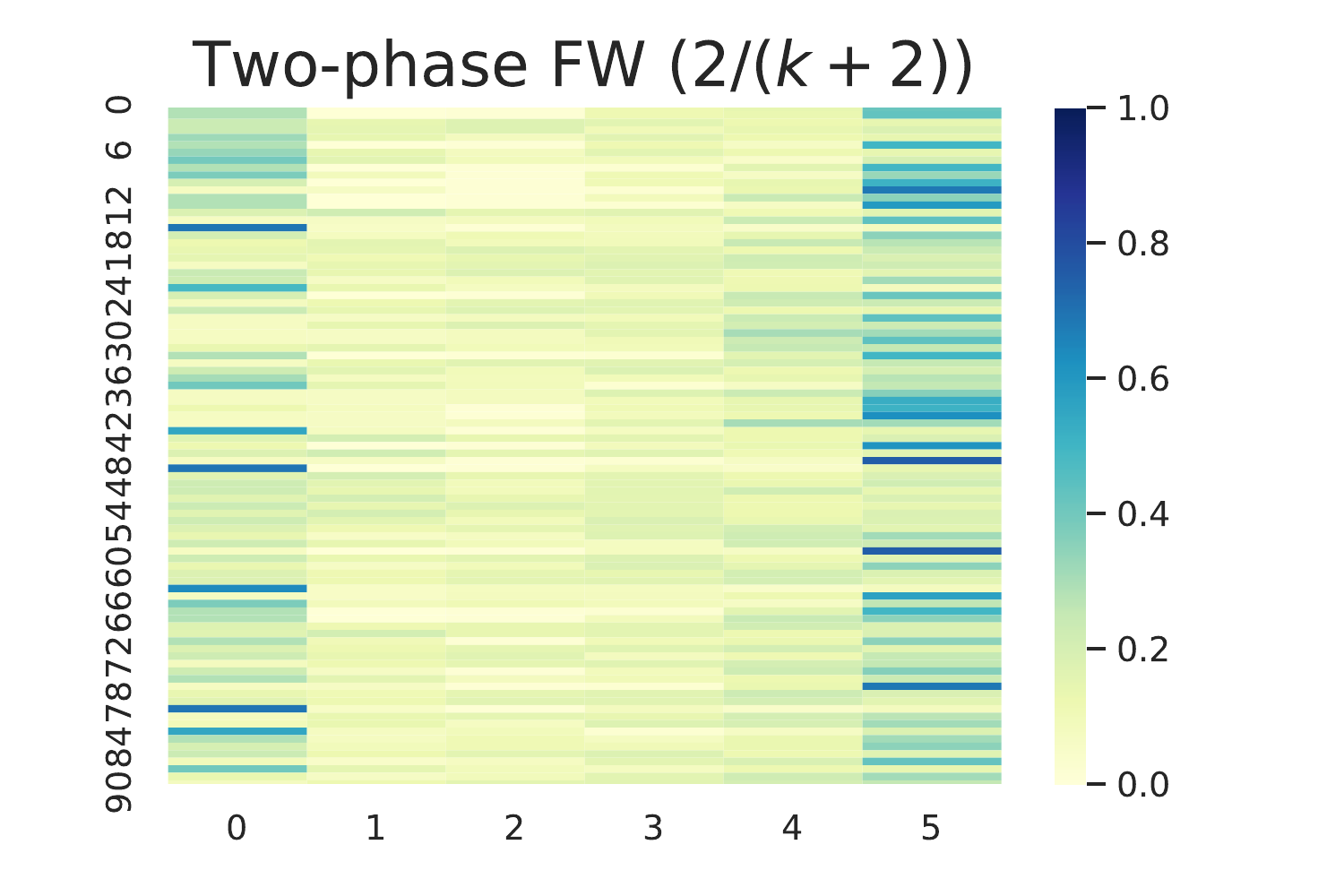}
%\caption{3-point bending}
%\label{fig:figure14_1}
\end{subfigure}\hfill
\begin{subfigure}[t]{0.33\textwidth}
  \includegraphics[width=\linewidth]{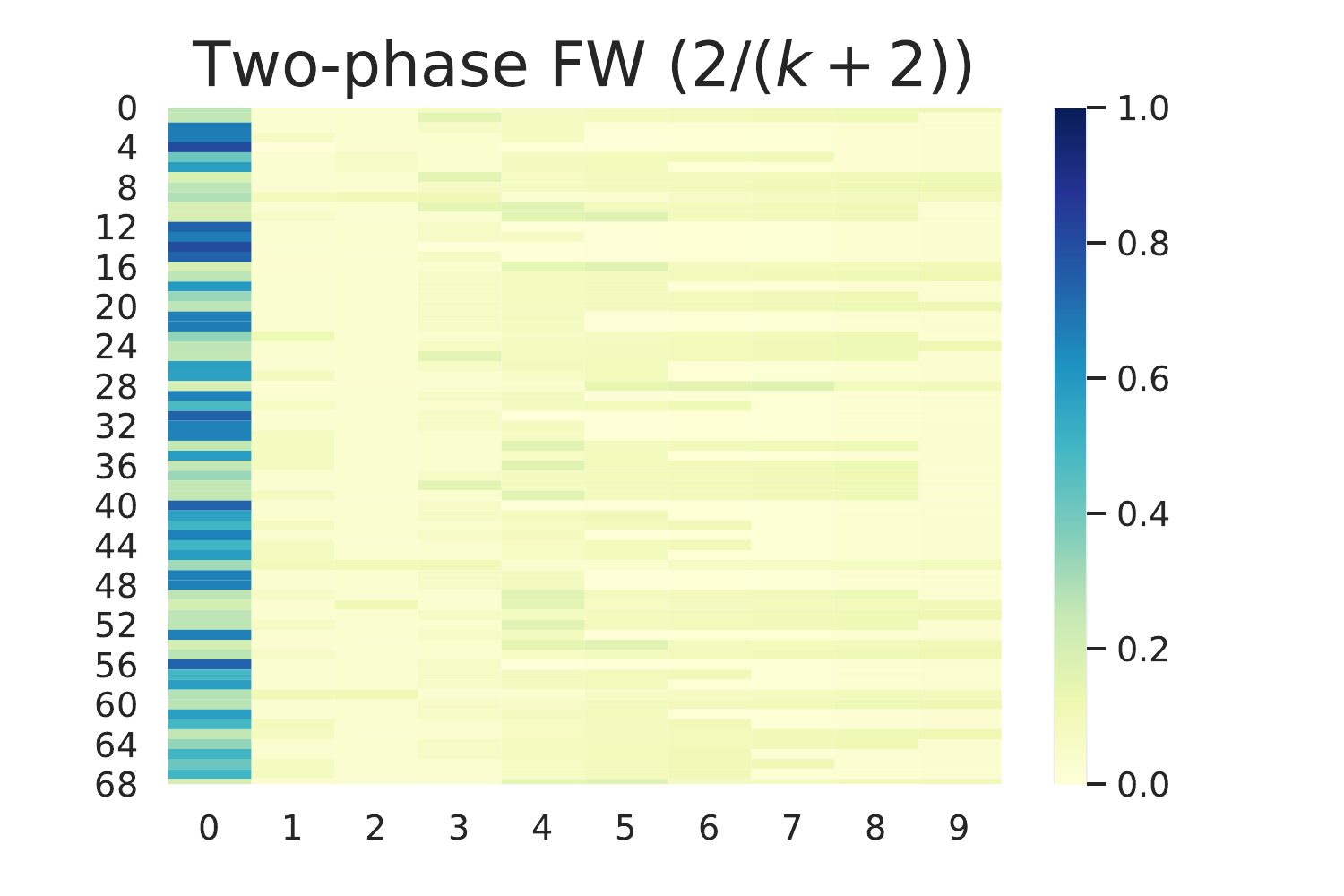}
%\caption{Elastic stress distribution}
%\label{fig:figure14_2}
\end{subfigure}\hfill
\begin{subfigure}[t]{0.33\textwidth}
    \includegraphics[width=\linewidth]{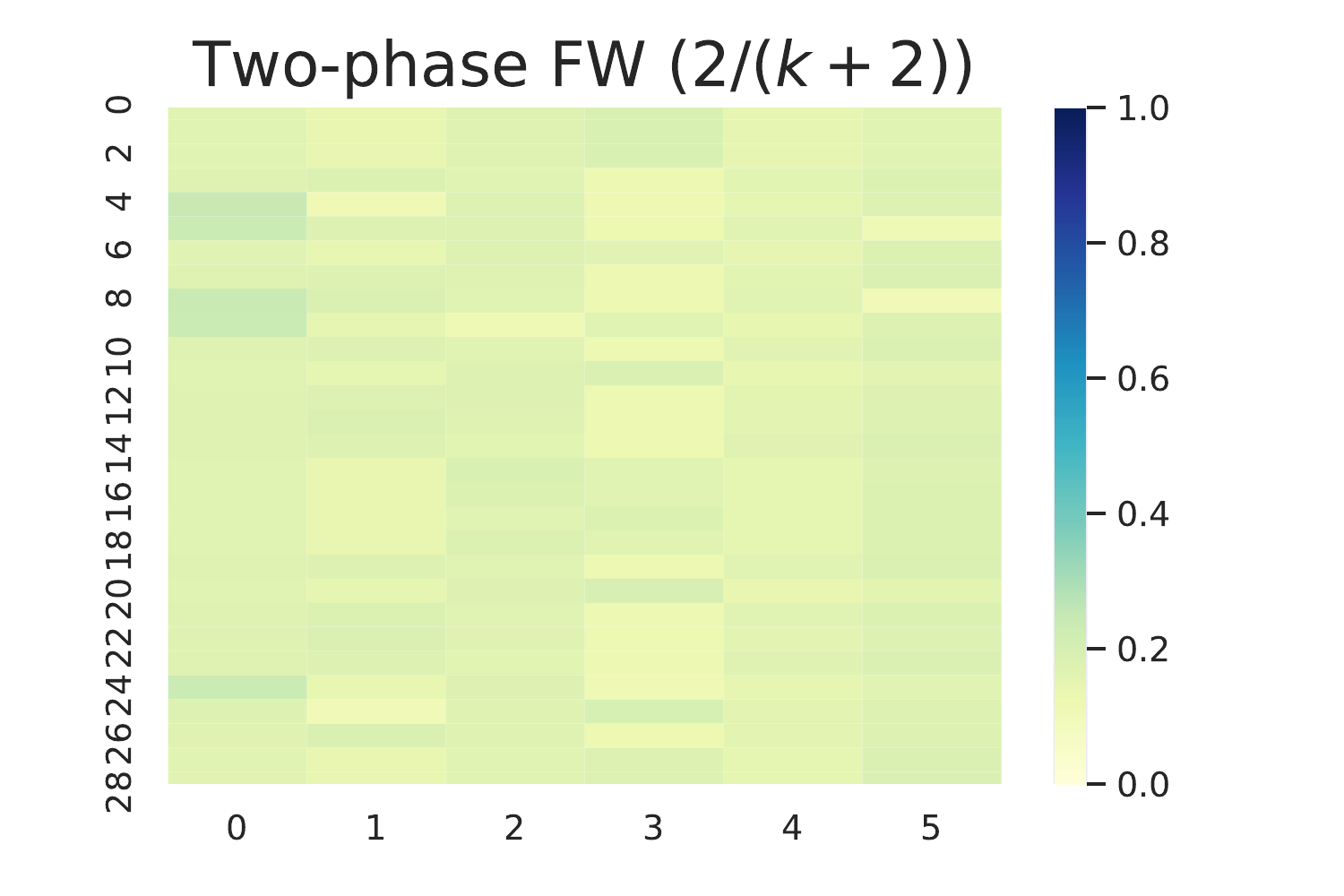}
%\caption{Plastic stress distribution}
%\label{fig:figure14_3}
\end{subfigure}

\begin{subfigure}[t]{0.33\textwidth}
    \includegraphics[width=\linewidth]{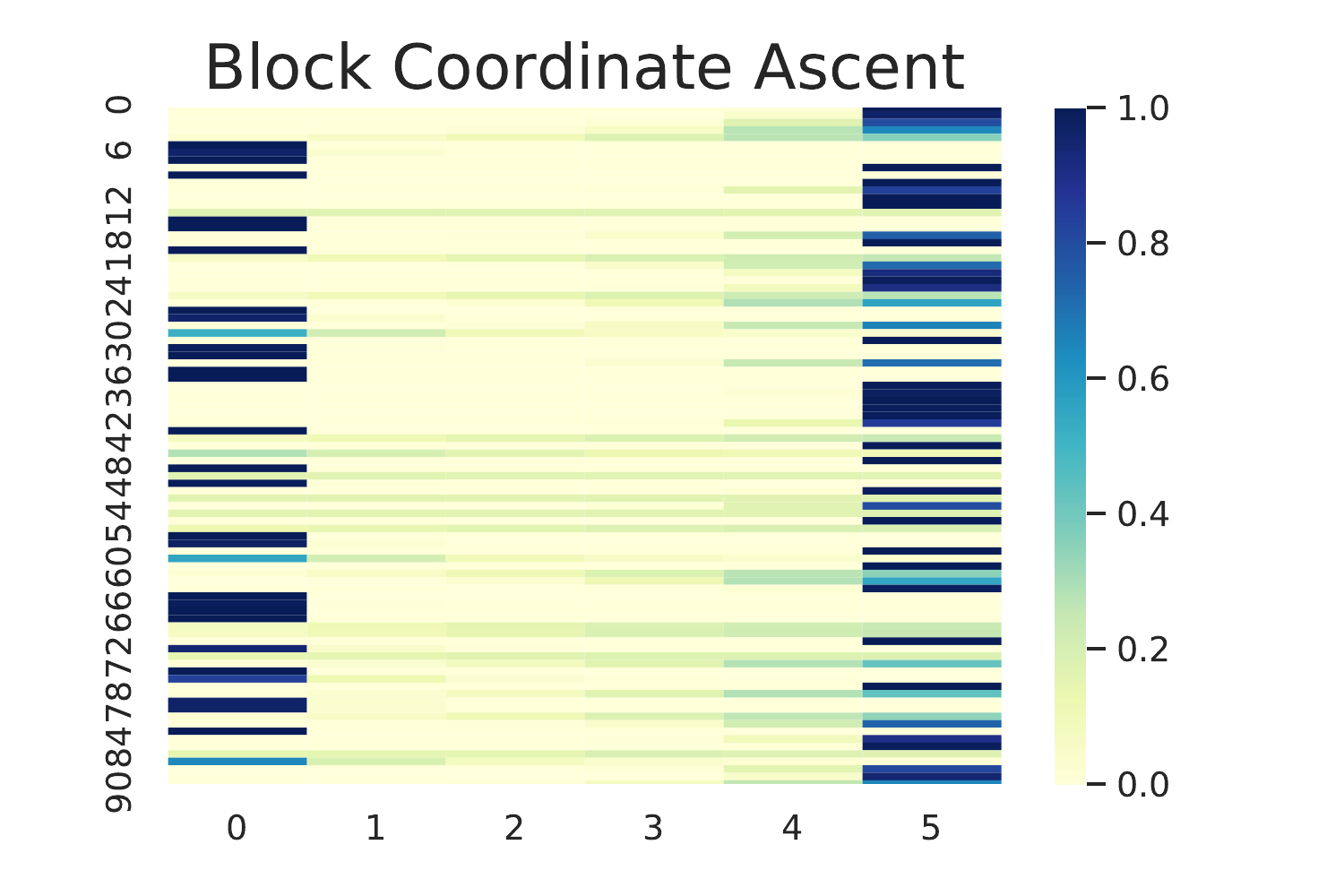}
%\caption{3-point bending}
%\label{fig:figure14_1}
\end{subfigure}\hfill
\begin{subfigure}[t]{0.33\textwidth}
  \includegraphics[width=\linewidth]{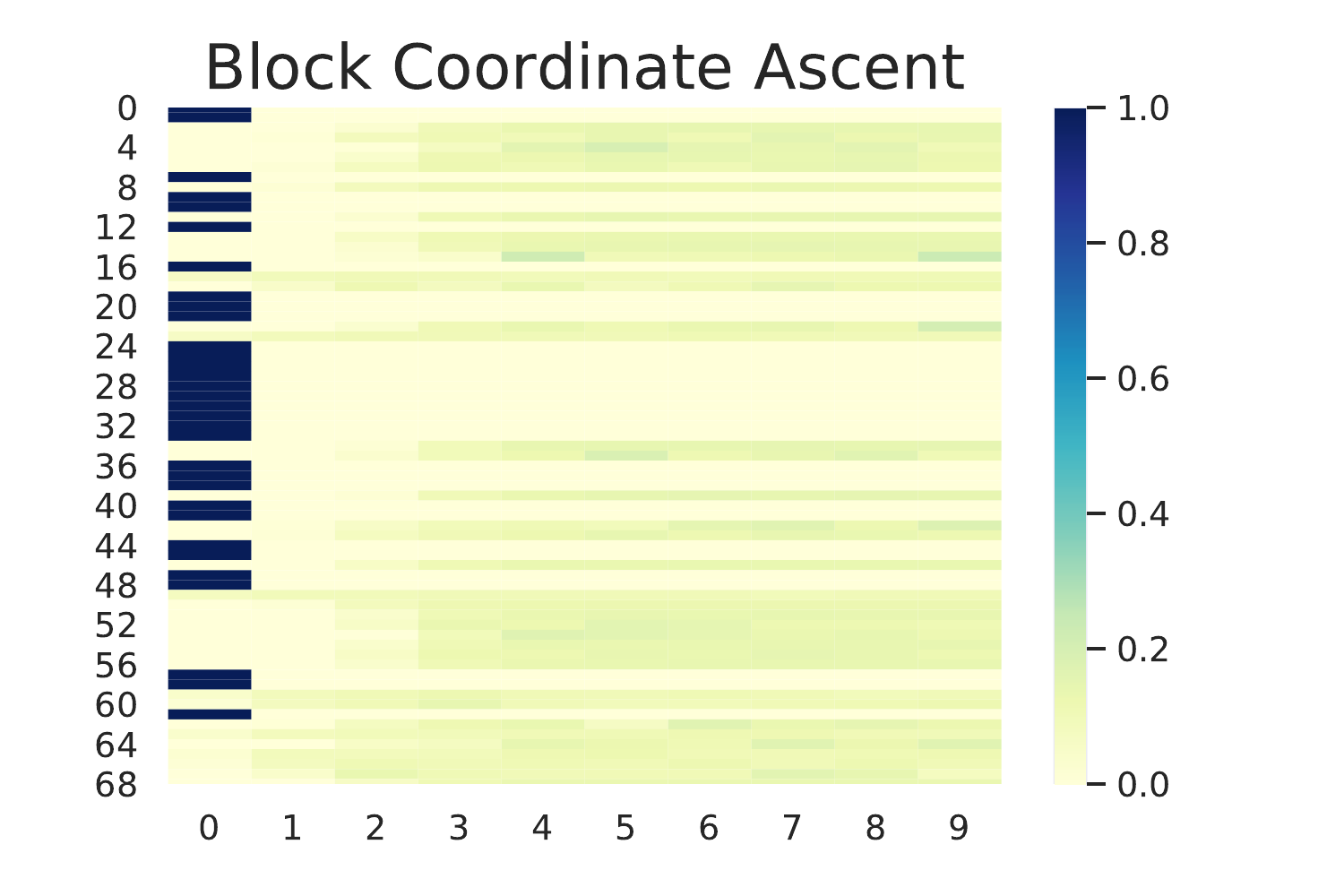}
%\caption{Elastic stress distribution}
%\label{fig:figure14_2}
\end{subfigure}\hfill
\begin{subfigure}[t]{0.33\textwidth}
    \includegraphics[width=\linewidth]{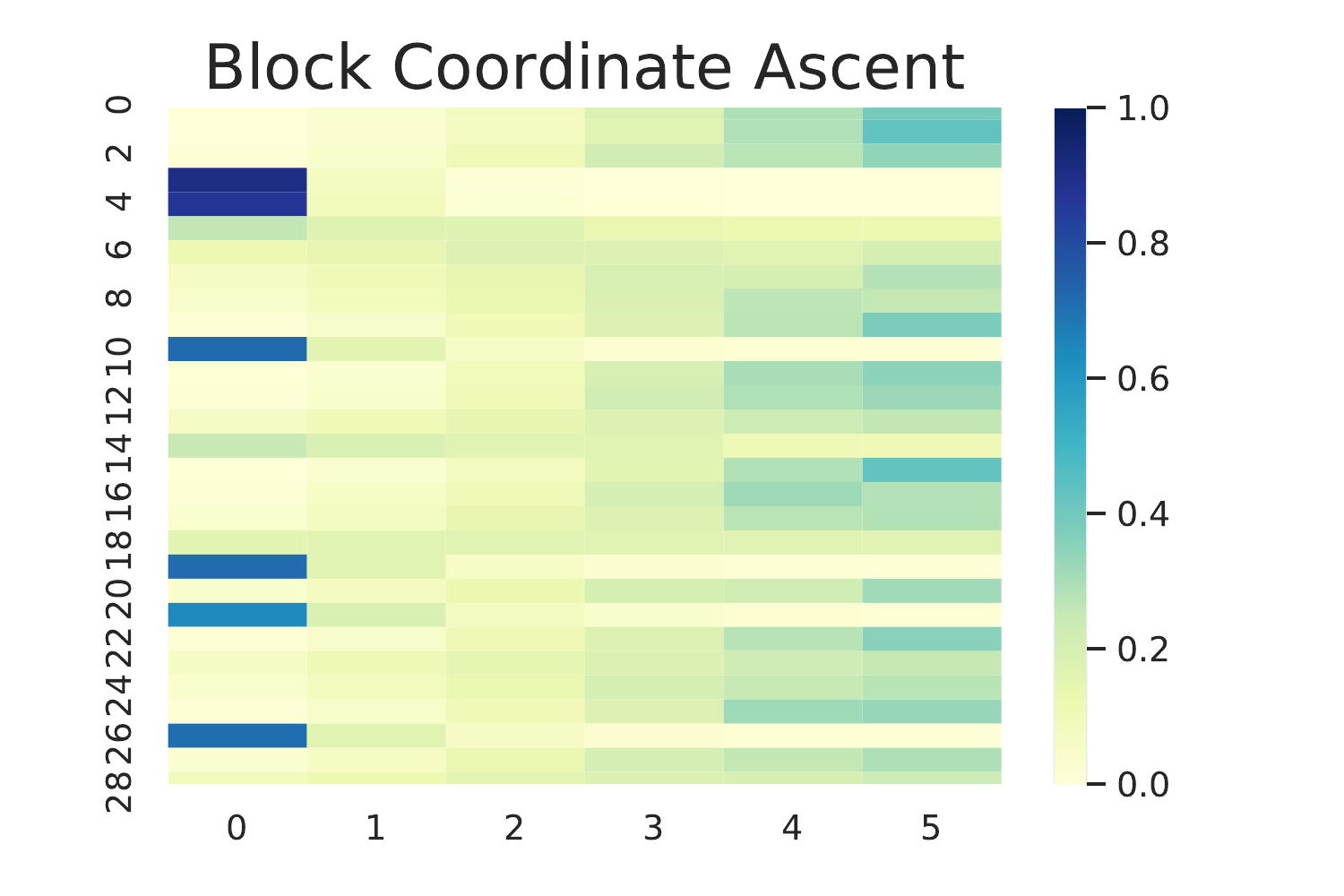}
%\caption{Plastic stress distribution}
%\label{fig:figure14_3}
\end{subfigure}
\begin{subfigure}[t]{0.33\textwidth}
    \includegraphics[width=\linewidth]{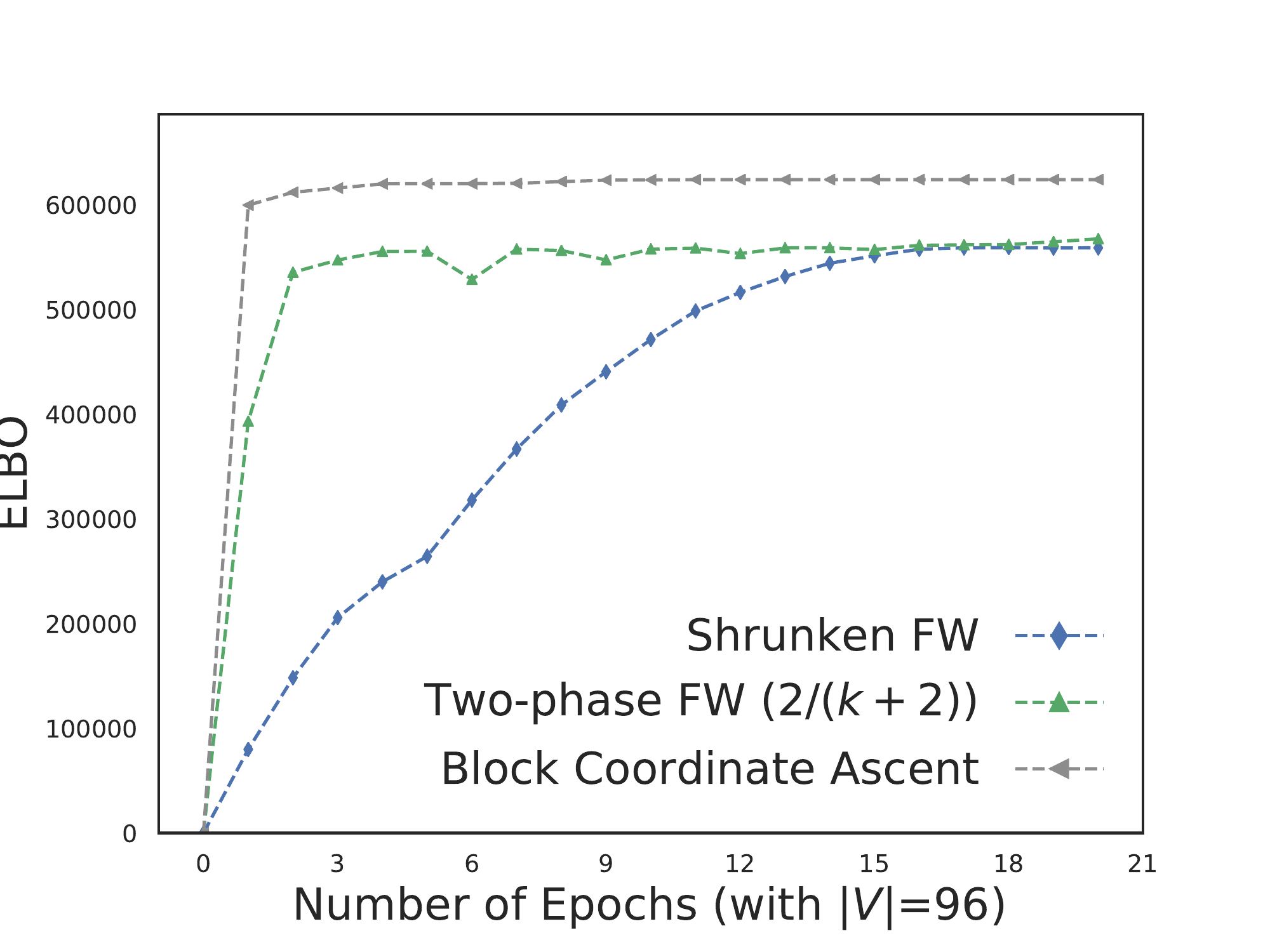}
 \caption{ ``Reality Mining'' dataset $n=96$.}
  \label{fig_Reality}
\end{subfigure}\hfill
\begin{subfigure}[t]{0.33\textwidth}
  \includegraphics[width=\linewidth]{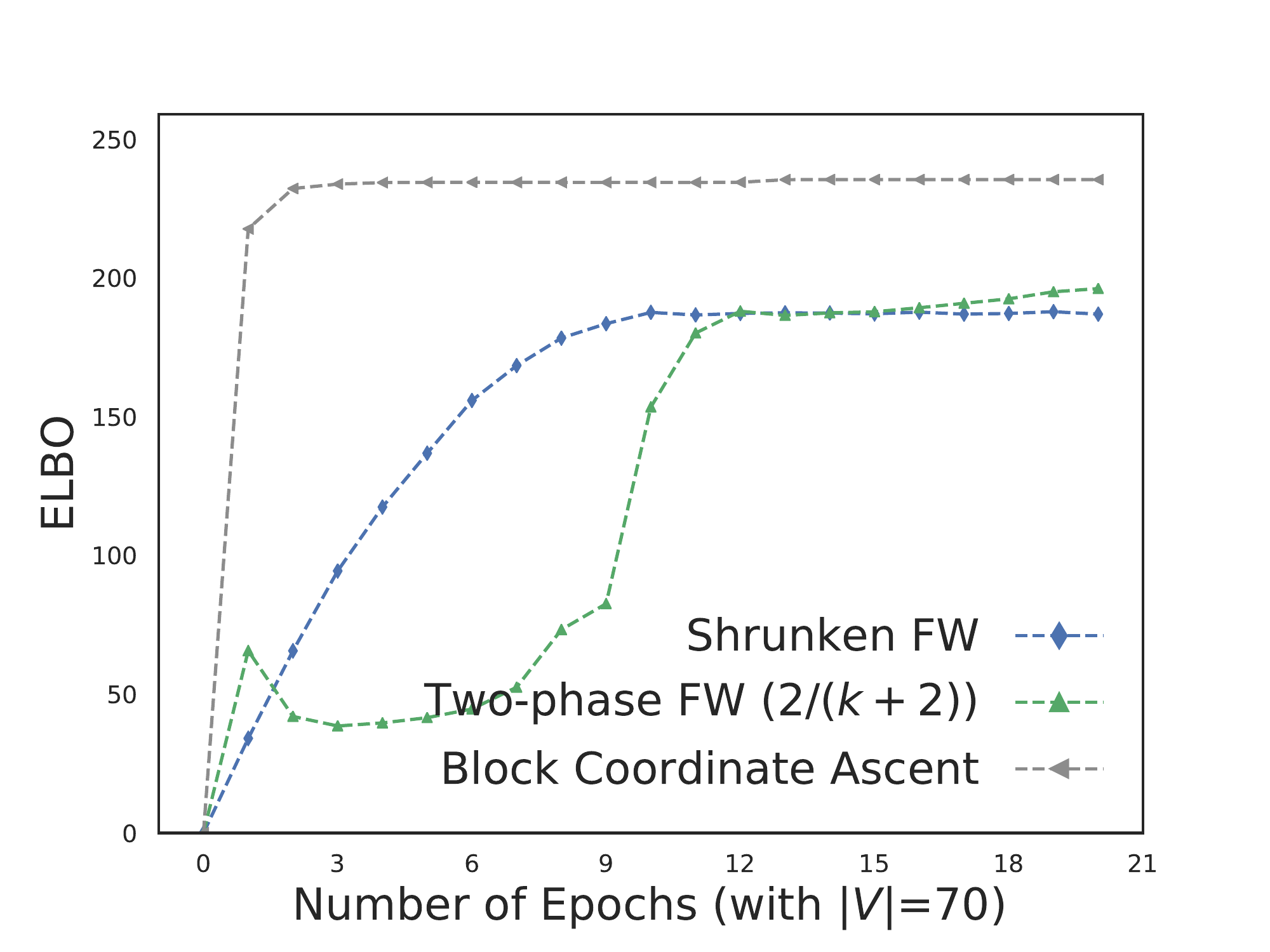}
  \caption{``Highschool'' dataset $n=70$.}
  \label{fig_Highschool}
\end{subfigure}\hfill
\begin{subfigure}[t]{0.33\textwidth}
    \includegraphics[width=\linewidth]{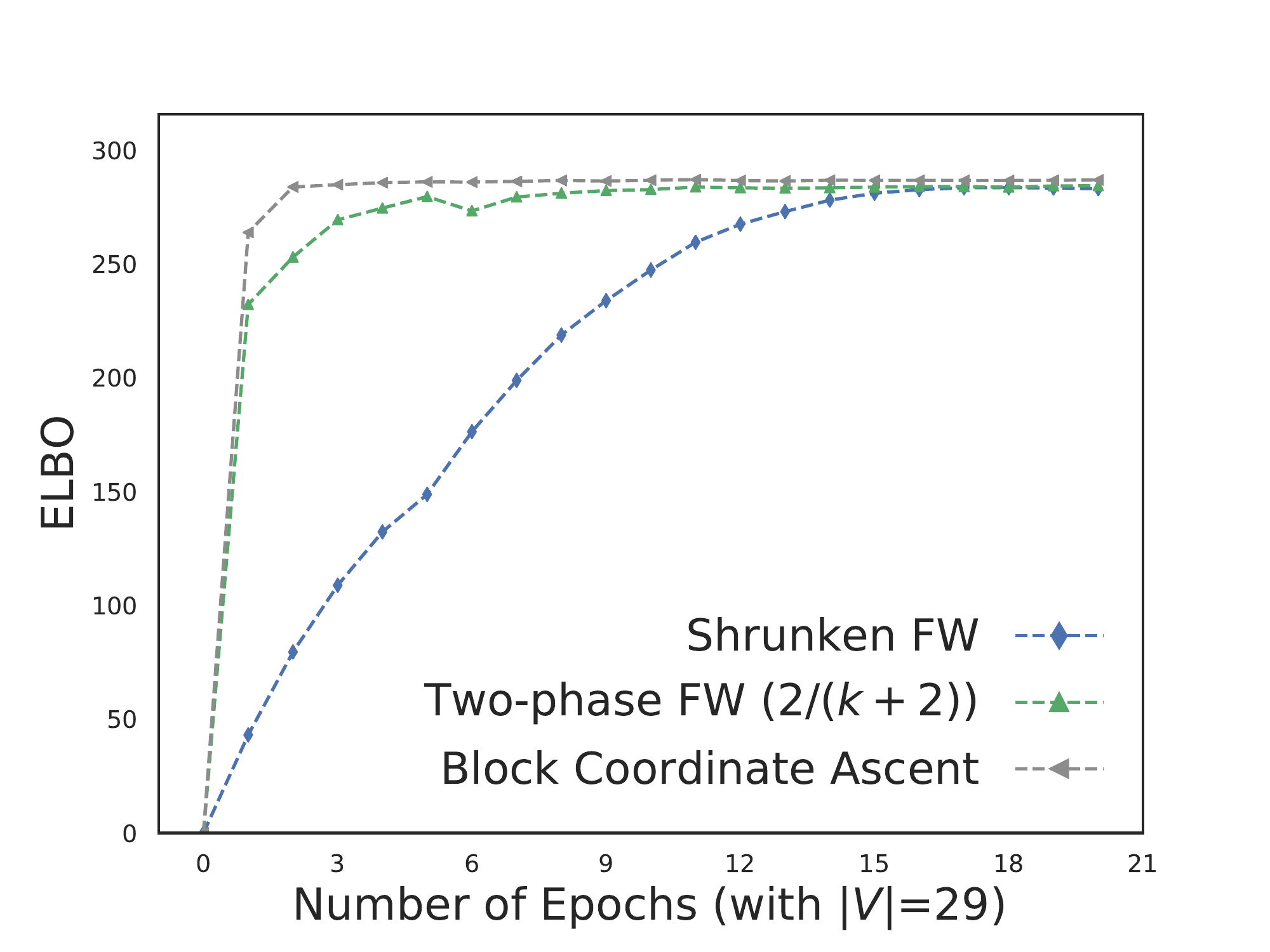}
\caption{``Seventh Graders'' dataset $n=29$.}
\label{fig_facloc}
\end{subfigure}
\caption{Marginals and Trajectories for Different Datasets and Functions}
\label{appendix_plots}
\end{figure*}

\end{document}